\newtheorem{mydefinition}{\bf{Definition}}
\newtheorem{assumption}{\bf{Assumption}}
\newtheorem{problem}{\bf{Problem}}
\newtheorem{mytheorem}{\bf{Theorem}}
\newtheorem{mycorollary}{Corollary}[mytheorem]
\newcommand{\funcstyle}[1]{{\protect\NoHyper\FuncSty{#1}\protect\endNoHyper}}
\DeclareMathOperator*{\argmax}{arg\,max}
\DeclareMathOperator*{\argmin}{arg\,min}
\newcommand*\circled[1]{\tikz[baseline=(char.base)]{
            \node[shape=circle,draw,inner sep=0.1pt] (char) {#1};}}
\title[Probabilistic Satisfaction of Temporal Logic Constraints in Reinforcement Learning]{Probabilistic Satisfaction of Temporal Logic Constraints in Reinforcement Learning via Adaptive Policy-Switching}
\author{%
 \Name{Xiaoshan Lin} \Email{lin00668@umn.edu}\\
 \addr University of Minnesota, Minnesota, USA
\AND
 \Name{Sad{\i}k Bera Y\"{u}ksel} \Email{yuksel.sa@northeastern.edu}\\
  \Name{Yasin Yaz{\i}c{\i}o\u{g}lu} \Email{y.yazicioglu@northeastern.edu}\\
  \Name{Derya Aksaray} \Email{d.aksaray@northeastern.edu}\\
 \addr Northeastern University, Massachusetts, USA
}
\begin{document}

\maketitle

\begin{abstract}
Constrained Reinforcement Learning (CRL) is a subset of machine learning that introduces constraints into the traditional reinforcement learning (RL) framework. Unlike conventional RL which aims solely to maximize cumulative rewards, CRL incorporates additional constraints that represent specific mission requirements or limitations that the agent must comply with during the learning process. In this paper, we address a type of CRL problem where an agent aims to learn the optimal policy to maximize reward while ensuring a desired level of temporal logic constraint satisfaction throughout the learning process. We propose a novel framework that relies on switching between pure learning (reward maximization) and constraint satisfaction. This framework estimates the probability of constraint satisfaction based on earlier trials and properly adjusts the probability of switching between learning and constraint satisfaction policies. We theoretically validate the correctness of the proposed algorithm and demonstrate its performance through comprehensive simulations. 
\end{abstract}

\begin{keywords}%
  Reinforcement Learning, Formal Methods in Robotics and Automation %
\end{keywords}

\section{Introduction}

Reinforcement learning (RL) relies on learning optimal policies through trial-and-error interactions with the environment. However, many real-life systems need to not only maximize some objective function but also satisfy certain constraints on the system's trajectory. Conventional formulations of constrained RL (e.g. \cite{achiam2017constrained,chow2018lyapunov,garcia2015comprehensive}) focus on maximizing reward functions while keeping some cost function below a certain threshold. In contrast, robotic systems often require adherence to more intricate spatial-temporal constraints. For instance, a robot should ``pick up from region A and deliver to region B within a specific time window, while avoiding collisions with any object".

 Temporal logic (TL) is a formal language that can express spatial and temporal specifications. In recent years, RL subject to TL constraints has gained significant interest, especially in the robotics community. One common approach involves encoding constraint satisfaction into the reward function and learning a policy by maximizing the cumulative reward (e.g., \cite{li2017reinforcement, aksaray2016q}). Another approach focuses on modifying the exploration process during RL, such as the shielded RL proposed in \cite{alshiekh2018safe} that corrects unsafe actions to satisfy Linear Temporal Logic (LTL) constraints. Similarly, \cite{hasan2020} constructs a safe padding based on maximum likelihood estimation and Bellman update, combined with a state-adaptive reward function, to maximize the probability of satisfying LTL constraints. A model-based approach is introduced for safe exploration in deep RL by \cite{cai2021safe}, which employs Gaussian process estimation and control barrier functions to ensure a high likelihood of satisfying LTL constraints. Although these approaches focus on maximizing the probability of satisfaction, they do not provide guarantees on satisfying TL constraints with a \emph{desired probability} during the learning process. Moreover, \cite{jansen2020safe} proposes a method based on probabilistic shield and model checking to ensure the satisfaction of LTL specifications under model uncertainty. However, this method lacks guarantees during the early stages of the learning process. Finally, \cite{aksaray2021probabilistically} and \cite{lin2023timewindow} assume partial knowledge about the system model and leverage it to prune unsafe actions, thus ensuring the satisfaction of Bounded Temporal Logic (BTL) with a desired probabilistic guarantee throughout the learning process. However, these two methods require learning over large state spaces which leads to scalability issues. Furthermore, \cite{aksaray2021probabilistically}, which is the closest work to this paper, is only applicable to a more restrictive family of BTL formulas.
    
Driven by the need for a computationally efficient solution that offers desired probabilistic constraint satisfaction guarantees throughout the learning process (even in the first episode of learning), we propose a novel approach that enables the RL agent to alternate between two policies during the learning process. The first policy is a stationary policy that prioritizes satisfying the BTL constraint, while the other employs RL to learn a policy on the MDP that only maximizes the cumulative reward. The proposed algorithm estimates the satisfaction rate of following the first policy and adaptively updates the switching probability to balance the need for constraint satisfaction and reward maximization. We theoretically show that, with high confidence, the proposed approach satisfies the BTL constraint with a probability greater than the desired threshold. We also validate our approach via simulations.

\section{Preliminaries: Bounded Temporal Logic}
Bounded temporal logics (BTL) (e.g., Bounded Linear Temporal Logic \cite{zuliani2010bayesian}, Interval Temporal Logic \cite{cau1997refining}, and Time Window Temporal Logic (TWTL) \cite{twtl}) are expressive languages that enable users to define specifications with explicit time-bounds (e.g., ``visit region A and then region B within a desired time interval").   
We denote the set of positive integers by $\mathbb{Z}^{+}$, the set of atomic propositions by $AP$, and the power set of a finite set $\Sigma$ by $2^\Sigma$. In this paper, we focus on BTL that can be translated into a finite-state automaton.


\begin{mydefinition} (Finite State Automaton) A finite state automaton (FSA) is a tuple $\mathcal{A} = (Q, q_{init}, 2^\Sigma, \delta, F)$, where $Q$ is a finite set of states, $q_{init}$ is the initial state, $2^\Sigma$ is the input alphabet, $\delta : Q \times 2^\Sigma \rightarrow Q$ is a transition function, and $F$ is the set of accepting states.
\end{mydefinition}


While our proposed methods can be applied to any BTL that can be translated into an FSA, we will use TWTL specifications in our examples. Hence, we also provide some relevant preliminaries here. A TWTL \cite{twtl} formula over a set of atomic propositions $\Sigma$ is defined as follows:
    \begin{equation*}
        \phi ::= H^ds\,\,|\,\, H^d\neg s \,\,|\,\, \phi_1 \land \phi_2 \,\,|\,\, \phi_1 \lor \phi_2 \,\,|\,\, \neg\phi_1 \,\,|\,\, \phi_1 \cdot \phi_2 \,\,|\,\, [\phi_1]^{[a,b]}.
    \end{equation*}
Here, $s$ either represents the constant ``true" or an atomic proposition in $AP$;  $\phi_1$ and $\phi_2$ are TWTL formulas; $\land$, $\lor$, and $\neg$ denote the conjunction, disjunction, and negation Boolean operators, respectively; $\cdot$ is the concatenation operator; operator $H^d$ where $d \in \mathbb{Z}^{+}$, represents the hold operator; $[]^{[a,b]}$ denotes the within operator with $a, b \in \mathbb{Z}^{+}$ and $a \leq b$. For example, the statement ``stop at location A for 3 seconds" can be represented as $H^3\text{A}$, and ``take the customer to A within 20 minutes, and then pick up food from B within 60 minutes” can be written as $[H^0 A]^{[0,20]}\cdot[H^0 B]^{[0,60]}$. Detailed syntax and semantics of TWTL can be found in \cite{twtl}.

In this paper, we also allow temporal relaxations of TWTL specifications that can be encoded into a compact FSA representation. Temporally relaxed TWTL formulas accommodate tasks that may be completed ahead of or after their original deadlines. In that case, we can capture violation cases without the need of a total FSA. For instance, a formula $\phi = [H^0 A]^{[0,20]}\cdot[H^0 B]^{[0,60]}$ can be temporally relaxed as $\phi(\tau)=[H^0 A]^{[0,20 + \tau_1]}\cdot[H^0 B]^{[0,60 + \tau_2]}$, where $\tau = (\tau_1, \tau_2)$. Specifically, we consider a relaxed formula $\phi(\tau)$ whose time bound $\|\phi(\tau)\|$ does not exceed the time bound of $\phi$ (i.e., any delay in achieving tasks need to be compensated by the others to ensure the overall mission duration is not exceeded). 
 (Note: The time bound of $\phi$ is defined as the maximum time needed to satisfy $\phi$.)

\section{Problem Statement}
We consider a labeled-Markov Decision Process (MDP) denoted as $\mathcal{M}=(S,A,\Delta_M,R, l)$, where $S$ represents the state space, and $A$ denotes the set of actions. The probabilistic transition function is defined as $\Delta_M: S \times A \times S \rightarrow [0,1]$, while $R: S \rightarrow \mathbb{R}$ represents the reward function. Additionally, $l: S \rightarrow 2^{AP}$ is a labeling function that maps each state to a set of atomic propositions. An example MDP is shown in Fig.~\ref{fig:mdp}. Given a trajectory $\mathbf{s} = s_1s_2\,...$ over the MDP, the output word $\mathbf{o} = o_1o_2\,...$ is a sequence of elements from $2^{AP}$, where each element $o_i=l(s_i)$. The subword $o_i\,...o_j$ is denoted by $
\mathbf{o}_{i,j}$

\setlength{\belowcaptionskip}{-10pt}
\begin{figure}[h!]
    \begin{center}
        \resizebox*{0.75\columnwidth}{!}{\includegraphics{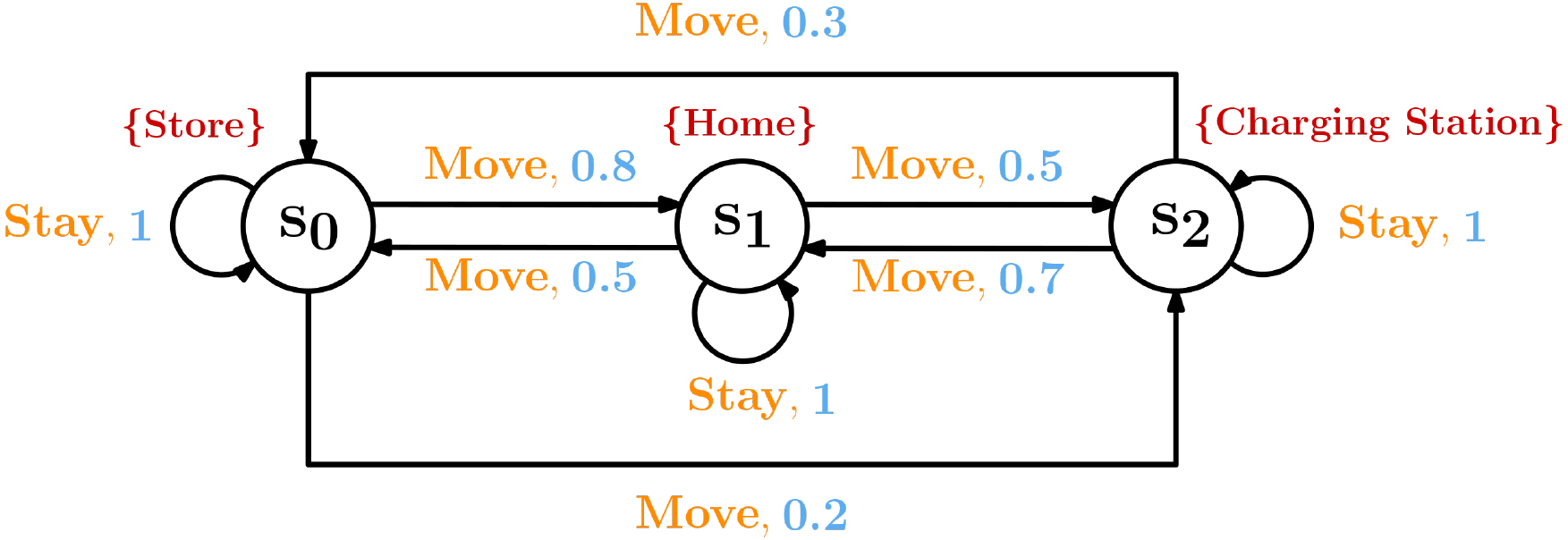}}%
        \caption{An MDP where $S=\{s_0,s_1, s_2\}$, $A=\{Move,Stay\}$, $AP=\{Home,Store, Charging\,\,Station\}$, $l(s_0)=\{Store\}$, $l(s_1) =\{Home\}$, $l(s_2)=\{Charging\,\,Station\}$. Edge labels indicate the corresponding action and transition probability.}
        \label{fig:mdp}
    \end{center}
\end{figure}

\begin{mydefinition} [Deterministic Policy] Given a labeled-MDP $\mathcal{M}=(S,A,\Delta_M,R, l)$, a deterministic policy is a mapping $\pi: S \rightarrow A$ that maps each state to a single action.
\end{mydefinition}

We address the problem of learning a policy that maximizes the reward while ensuring the satisfaction of a BTL specification with a probability greater than a desired threshold \emph{throughout the learning process}. Accordingly, while the policy improves to collect more reward, the desired probabilistic constraint satisfaction is guaranteed even in the first episode of learning. 
Clearly, such a formal guarantee cannot be achieved without any prior knowledge about the transition probabilities. In this paper, we assume that while the actual transition probabilities may be unknown, for each state $s$ and action $a$, the agent knows which states have a non-zero probability and which states have a sufficiently large probability of being observed as the next state $s'$. 

\begin{problem}    \label{problem_generic}     
    Suppose that the following are given: 
        \begin{itemize} 
        \item a labeled-MDP $\mathcal{M}=(S,A,\Delta_M,R, l)$ with unknown transition function $\Delta_M$ and reward function $R$,
        \item a BTL formula $\phi$ with time bound $\|\phi\|=T$,
        \item a desired probability threshold ${Pr_{des} \in (0,1]}$,
        \item some $\epsilon \in [0,1)$ such that for each MDP state $s$ and action $a$, the states $s'$ for which $\Delta_M(s,a,s')> 0$  and the states $s''$ for which $\Delta_M(s,a,s'') \geq 1-\epsilon$ are known, 
        \end{itemize}  find the optimal policy
        \begin{equation}
           \label{problem_objective}
            \pi^* = \arg\max\limits_{\pi }E^{\pi}[\sum_{t=0}^{\infty}\gamma^t r_t]
        \end{equation}
        such that, for every episode j in the learning process,
        \begin{equation}\label{problem_constraint}
        \begin{aligned} 
            \Pr\big(\mathbf{o}_{jT,jT + T} \models \phi{(\tau_j)}\big) \color{black} & \geq Pr_{des}, \quad \forall j\geq0 \\
            \|\phi(\tau_j) \| & \leq T
        \end{aligned}
        \end{equation}
        \noindent where $\gamma$ is a discount factor, $\mathbf{o}_{jT,jT + T}$ is the output word in episode j, $\tau_j$ is the time relaxation in episode $j$, $\phi(\tau_j)$ is a temporally-relaxed BTL constraint, and $\|\phi(\tau_j) \|$ is the time bound of $\phi(\tau_j)$. 
    \end{problem}

\section{Proposed Algorithm} 
We propose a solution to Problem \ref{problem_generic} by introducing a switching-based algorithm that allows switching between two policies: 1) a stationary policy derived from the product of the MDP and FSA for maximizing the probability of constraint satisfaction based on the available prior information, and 2) a policy learned over the MDP to maximize rewards. Before each episode, the RL agent determines whether to follow the constraint satisfaction policy or the reward maximization policy based on a computed switching probability. The proposed approach, separating constraint satisfaction from reward maximization, eliminates the need for a time-product MDP often used in the state-of-the-art, offering a more computationally efficient solution.  

\subsection{Policy for Constraint satisfaction}
Consider a task ``eventually visit A and then B". Suppose that the agent is at C. The agent must select an action that steers it towards 1) B if A has been visited before; or 2) A if A has not been visited yet. Hence, the selection of actions is determined by the agent's current state and the progress of constraint satisfaction, which can be encoded by a Product MDP.

\begin{mydefinition} [Product MDP]
        \label{def:aug-pa}
        Given a labeled-MDP $\mathcal{M} = (S,A,\Delta_M,R, l)$ and an FSA $\mathcal{A} = (Q, q_{init}, $ $O,\delta, F)$, a product MDP is a tuple $\mathcal{P} =\mathcal{M} \times \mathcal{A} = (S_{P},S_{P,init},A,\Delta_{P}, R_{P}, F_{P})$, where
        \begin{itemize}
            \item $S_{P} = S \times Q$ is a finite set of states;
           \item $S_{P,init}=\{(s,\delta(q_{init}, l(s))\, | \,\forall s \in S\}$ is the set of initial states, where $\delta$ is the transition function of the FSA;
            \item $A$ is the set of actions;
            \item $\Delta_{P} : S_{P} \times A \times S_{P} \rightarrow [0,1]$ is the probabilistic transition relation such that for any two states, $p=(s,q) \in S_{P}$ and $p^{\prime}=(s^{\prime},q^{\prime}) \in S_{P}$, and any action $a \in A$, $\Delta_{P}(p,a,p^\prime)=\Delta_{M}(s,a,s^\prime)$ if $\delta(q,l(s))=q^\prime$; 
            \item $R_{P}:S_{P} \rightarrow \mathbb{R}$ is the reward function such that $R_{P}(p) = R(s)$ for $p=(s,q) \in S_{P}$;
            \item $F_{P} = (S \times F_A) \subseteq S_{P}$ is the set of accepting states.
        \end{itemize}

    \end{mydefinition}  

The policy for constraint satisfaction is designed to maximize the probability of reaching $F_{P}$ from any state of the product MDP. We can obtain such a policy by selecting the action to minimize the expected distance to $F_{P}$ from each state. Thus, we define $\epsilon$-stochastic transitions and distance-to-$F_{P}$.

\begin{mydefinition}[$\epsilon$-Stochastic Transitions]
Given a product MDP and some $\epsilon \in [0,1)$, any transition $(p_i,a,p_j)$ such that $\Delta_{P}(p_i,a,p_j) \geq 1-\epsilon$ is defined as an $\epsilon$-stochastic transition.   
\end{mydefinition}

If transition $(p_i,a,p_j)$ is a $0$-stochastic transition, the agent will move to state $p_j$ with probability 1 after taking action $a$ at state $p_i$. Oppositely, as $\epsilon$ approaches 1, any feasible transition becomes a $\epsilon$-stochastic transition. Next, we will use $\epsilon$-stochastic transitions to define \textit{Distance-To-}$F_{P}$.  

\begin{mydefinition}
     [Distance-To-$F_{P}$]
    Given a product MDP, for any product MDP state $p$, the distance from $p$ to the set of accepting states $F_{P}$ is 
    \begin{equation}\label{dist-to-F}
        D^\epsilon(p) = \min\limits_{p' \in F_{P}} dist^\epsilon(p,p')
    \end{equation} 
    where $dist^\epsilon(p,p')$ represents the minimum number of $\epsilon$-stochastic transitions to move from state $p$ to state $p'$. 
\end{mydefinition}
    
The distance-to-$F_{P}$, $D^\epsilon(p)$, represents the minimum number of $\epsilon$-stochastic transitions needed for reaching the set of accepting states from a state $p$. We will use this metric to design a policy for constraint satisfaction (reaching the accepting states) and derive a lower bound on the probability of constraint satisfaction within a time bound.

\begin{mydefinition}
[$\pi_{GO}^{\epsilon}$ Policy] Given a product MDP and $\epsilon \in [0,1)$, $\pi^{\epsilon}_{GO} : S_{P} \rightarrow A$, is a stationary policy over the product MDP such that
\begin{equation}\label{eq:pigo}
   \pi^{\epsilon}_{GO}(p) = \argmin \limits_{a\in A} D^\epsilon_{min}(p,a),
\end{equation}
\text{where} $D^\epsilon_{min}(p,a) = \min \limits_{p':\Delta_{P}(p,\,a,\,p')\geq 1-\epsilon} D^{\epsilon}(p')$, i.e., the minimum distance-to-$F_P$ among the states reachable from $p$ under action $a$ with probability of at least $1-\epsilon$.
\end{mydefinition}

The policy $\pi^{\epsilon}_{GO}$ aims to reduce the distance-to-$F_{P}$ and is computed in Alg.~\ref{alg:offline}. 
The inputs of Alg.~\ref{alg:offline} are the transition uncertainty, the MDP, and the BTL constraint.
First, an FSA is generated from the BTL formula, and a product MDP is constructed using this FSA and the given MDP (lines 1-2). Then the product MDP is used to calculate the distance-to-$F_{P}$ for all states (line 3). Finally, the $\pi_{GO}^{\epsilon}$ policy is computed by selecting the action that minimizes the distance-to-$F_{P}$ for each state (lines 4-5). 

 \begin{algorithm2e}[htb!]
\SetKwInOut{Input}{Input}
\SetKwInOut{Output}{Output}
\Input{\justifying{uncertainty $\epsilon \in [0,1)$;\,MDP $\mathcal{M} = (S,A,\Delta_M,R, l)$;\, BTL formula $\Phi$ }}
\Output{$\pi_{GO}^{\epsilon}(\cdot)$ policy}
\caption{Off-line computation of $\pi^{\epsilon}_{GO}$ policy}\label{alg:offline}
\DontPrintSemicolon
 Create FSA of $\Phi$, $\mathcal{A} = (Q, q_{init}, 2^{AP}, \delta, F_A )$;\\
 Create product MDP, $\mathcal{P} = \mathcal{M} \times \mathcal{A} = (S_{P},P_{init},A,\Delta_{P}, R_{P}, F_{P})$;\\
Calculate the distance-to-$F_{P}$, i.e., $d^{\epsilon}(p)$ for all $p \in S_{P}$;\\
\For{each $p \in S_{P}$}{
 $\pi_{GO}^{\epsilon}(p) \leftarrow (\ref{eq:pigo})$
}
\end{algorithm2e}   

  In order to achieve probabilistic constraint satisfaction in each episode, our approach builds a conservative estimation of the probability of reaching an accepting state under the $\pi_{GO}^{\epsilon}$ policy for every initial state. In particular, we present two methods for computing a lower bound on this satisfaction probability. \emph{The first method} involves a closed-form equation outlined in Theorem \ref{theorem1}. To derive this equation, we introduce an additional assumption on the product-MDP. \emph{The second method}, which relaxes this assumption, utilizes a recursive approach. 

   \begin{assumption} \label{assumption_1}
    The MDP and FSA (constraint) are such that any feasible transition in the resultant product MDP cannot increase the distance-to-$F_{P}$ by more than some $\delta_{max}\in \mathbb{Z^+}$.
     \end{assumption}
   \noindent Assumption \ref{assumption_1} is the relaxed version of the assumption made in \cite{aksaray2021probabilistically}, which only allows $\delta_{max}=1$.  This relaxed assumption poses fewer constraints on the BTL formulas that can be handled by our proposed approach. For example, we can accommodate formulas such as ``eventually stay at A for 3 time steps". The distance-to-$F_{P}$ can increase by 3 if the robot leaves A right before it stays at A for 3 time steps.
    
\begin{mytheorem} \label{theorem1} Let Assumption \ref{assumption_1} hold. For any $p \in S_{P}$ of the given product MDP $\mathcal{P} = (S_{P},P_{init},A,\Delta_{P}, R_{P}, F_{P})$, let integer $k > 0$ denote the remaining time steps, $d= D^\epsilon(p)$ denote the distance-to-$F_{P}$ from $p$, and $Pr(p \xrightarrow{k} F_{P}; \pi_{GO}^{\epsilon})$ be the probability of reaching $F_{P}$ from $p$ within the next $k$ time steps under the policy $\pi_{GO}^{\epsilon}$.  If $0 < d < \infty$, then
    \begin{equation}\label{lower_bound}
       Pr(p\xrightarrow{k} F_{P}; \pi_{GO}^{\epsilon}) \geq lb^c[p][k],
    \end{equation}
    where
    
    \begin{equation*}
        \begin{split}
            &lb^c[p][k] = \sum_{m=1}^{k} P(T_{m}), \\
            &P(T_{m}) = \left[C_{m}^{\frac{m-d}{1+\delta_{max}}}\epsilon^{\frac{m-d}{1+\delta_{max}}}(1-\epsilon)^{\frac{m\delta{max}+d}{1+\delta_{max}}} - \sum_{m'=1}^{m-1}C_{m-m'}^{\frac{m-m'}{1+\delta_{max}}}\epsilon^{\frac{m-m'}{1+\delta_{max}}}(1-\epsilon)^{\frac{(m-m')\delta_{max}}{1+\delta_{max}}}P(T_{m'})\right], \\
            &C_{m}^{n} =
         \begin{cases}
          0 & \text {if }n > m \hspace{0.4em} \text{or $n \notin \mathbb{Z^+}$,}  \\
           \frac{m!}{n!(m-n)!} & \text{otherwise.}
          \end{cases} 
        \end{split}
    \end{equation*} 
    \normalsize
 \end{mytheorem}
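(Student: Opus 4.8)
The plan is to reduce the reachability question on the product MDP to a one-dimensional first-passage problem for a worst-case random walk on the distance values $D^\epsilon(\cdot)$, and then to solve that walk's first-passage probability in closed form by a renewal decomposition. Throughout I use $0 < d < \infty$ (so $p \notin F_P$ but $F_P$ is reachable). First I would pin down the one-step behavior of $\pi_{GO}^{\epsilon}$. By definition of $D^\epsilon$, a state $p$ with $D^\epsilon(p)=d>0$ admits an $\epsilon$-stochastic shortest path to $F_P$, whose first edge corresponds to an action under which some successor $p^*$ with $D^\epsilon(p^*)=d-1$ is reached with probability $\geq 1-\epsilon$; conversely no action can reach (with probability $\geq 1-\epsilon$) a successor of distance $<d-1$, since that would contradict $D^\epsilon(p)=d$. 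Hence $D^\epsilon_{min}(p,\pi_{GO}^{\epsilon}(p))=d-1$, so under $\pi_{GO}^{\epsilon}$ the distance drops by exactly one with probability $\geq 1-\epsilon$ (a ``good'' step), while with the complementary probability $\leq \epsilon$ the agent lands on some other successor whose distance is at most $d+\delta_{max}$ by Assumption \ref{assumption_1} (a ``bad'' step).

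Next I would compare the true distance process $\{d_t\}$ under $\pi_{GO}^{\epsilon}$, started at $d$, with an idealized walk $\{\tilde d_t\}$ started at $d$ that decreases by $1$ with probability exactly $1-\epsilon$ and increases by $\delta_{max}$ with probability exactly $\epsilon$. Using a shared uniform $U_t$ at each step, and routing a good step of $\{d_t\}$ to $p^*$ whenever $U_t$ lies below the true good-transition mass $q_t\geq 1-\epsilon$, I would build a coupling that maintains $d_t \leq \tilde d_t$ for all $t$ by induction: in every case the slack $q_t\geq 1-\epsilon$ and the bound $\leq \delta_{max}$ on increases make the ordering go through. Since distances are nonnegative and down-steps are unit, $\tilde d_t = 0$ forces $d_t=0$, so the true hitting time of $F_P$ is no larger than the idealized walk's, yielding $Pr(p \xrightarrow{k} F_P; \pi_{GO}^{\epsilon}) \geq Pr(\tilde d \text{ hits } 0 \text{ within } k)$. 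This is the crux: it converts the policy guarantee and Assumption \ref{assumption_1} into a distribution-free lower bound on a clean 1D walk.

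It then remains to evaluate the idealized walk's probability of hitting $0$ within $k$ steps as $\sum_{m=1}^{k} P(T_m)$, with $P(T_m)$ the first-passage-at-step-$m$ probability. A length-$m$ free-walk path from $d$ ending at $0$ must use $b$ up-steps and $g=m-b$ down-steps with net displacement $-d$, forcing $b=\tfrac{m-d}{1+\delta_{max}}$ and $g=\tfrac{m\delta_{max}+d}{1+\delta_{max}}$; counting arrangements gives the ``at-$0$-at-$m$'' probability $C_m^{\frac{m-d}{1+\delta_{max}}}\epsilon^{\frac{m-d}{1+\delta_{max}}}(1-\epsilon)^{\frac{m\delta_{max}+d}{1+\delta_{max}}}$, which is $0$ exactly when $b\notin\mathbb{Z}^+$, matching the $C_m^n$ convention. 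By the strong Markov property this at-$0$ probability decomposes as a convolution of first-passage probabilities with return-to-$0$ probabilities (the latter being the same expression with $d=0$, i.e. the $C_{m-m'}^{\frac{m-m'}{1+\delta_{max}}}\epsilon^{\cdots}(1-\epsilon)^{\cdots}$ factors); solving this renewal equation for the first-passage term produces precisely the recursion defining $P(T_m)$, whose subtracted sum removes paths that already reached $F_P$ at some earlier step $m'<m$.

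Summing, $\sum_{m=1}^{k}P(T_m)=lb^c[p][k]$ lower-bounds the idealized hitting probability, which by the coupling lower-bounds the true reaching probability, establishing (\ref{lower_bound}). I expect the main obstacle to be the coupling argument: verifying that $d_t \leq \tilde d_t$ is preserved in every case once the two processes diverge requires re-invoking the ``good successor sits at distance $d_t-1$ with probability $\geq 1-\epsilon$'' property at the \emph{actual} current distance at each step, rather than at the idealized one. The renewal algebra is then routine, provided one carefully tracks the degenerate cases encoded by the $C_m^n$ convention.
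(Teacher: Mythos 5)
Your proposal is correct and takes essentially the same route as the paper: reduce the problem to a worst-case one-dimensional Bernoulli walk (down $1$ with probability $1-\epsilon$, up $\delta_{max}$ with probability $\epsilon$), decompose the hitting event by first-passage time $T_m$, compute the at-level probabilities by the same combinatorial counts, and obtain the recursion for $P(T_m)$ via the renewal identity. The only difference is presentational: your explicit coupling of the true distance process with the idealized walk rigorously formalizes the step the paper treats informally as a ``worst-case scenario'' assertion, while the rest of your argument coincides with the paper's derivation.
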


 \begin{proof}
     See the appendix.
 \end{proof}

\begin{mycorollary}\label{corollary_1} 
 Let Assumption \ref{assumption_1} hold. For any initial state $p_0 \in P_{init}$ such that $0<D^\epsilon(p_0)<\infty$ and $lb^c[p_0][T] \geq Pr_{des}$, 
 \begin{equation}\label{coreq}
       Pr(p_0\xrightarrow{T} F_{P}; \pi_{GO}^{\epsilon}) \geq Pr_{des},
    \end{equation}
    where $T$ is the time bound of the BTL constraint.
 \end{mycorollary}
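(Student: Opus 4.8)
The plan is to obtain Corollary \ref{corollary_1} as an immediate specialization of Theorem \ref{theorem1}: the corollary simply restates the theorem's lower bound at an initial product state over the full episode horizon and then compares it against the desired threshold. First I would set $p = p_0$ and $k = T$ and verify that the premises of Theorem \ref{theorem1} are met. Assumption \ref{assumption_1} is inherited directly as a hypothesis of the corollary, so it licenses the use of the theorem. The corollary assumes $0 < D^\epsilon(p_0) < \infty$, which is exactly the condition $0 < d < \infty$ with $d = D^\epsilon(p_0)$. It then remains only to confirm $k = T > 0$; this holds because $T = \|\phi\|$ is the time bound of the BTL constraint, a positive integer, so $T \geq 1 > 0$.

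With all premises satisfied, I would invoke Theorem \ref{theorem1} directly with $p = p_0$ and $k = T$ to conclude
\[
   Pr(p_0 \xrightarrow{T} F_{P}; \pi_{GO}^{\epsilon}) \geq lb^c[p_0][T].
\]
Finally, I would chain this with the standing hypothesis $lb^c[p_0][T] \geq Pr_{des}$ by transitivity of $\geq$, which yields $Pr(p_0 \xrightarrow{T} F_{P}; \pi_{GO}^{\epsilon}) \geq Pr_{des}$, i.e., the claimed inequality \eqref{coreq}.

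There is no genuine analytical obstacle here: all of the effort resides in establishing the closed-form lower bound $lb^c$ of Theorem \ref{theorem1}, and the corollary merely harvests it. The only points requiring care are bookkeeping ones. I would confirm that a single episode affords exactly $T$ transitions starting from an initial product state $p_0 \in P_{init}$, so that the theorem's ``remaining time steps'' $k$ can legitimately be taken equal to the episode horizon $T$. I would also note that the hypothesis $lb^c[p_0][T] \geq Pr_{des}$ implicitly guarantees feasibility (it would fail whenever $D^\epsilon(p_0) > T$ makes reaching $F_{P}$ within $T$ steps impossible, since $Pr_{des} > 0$), so no separate feasibility argument is needed.

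Conceptually, I would close by remarking that this product-MDP statement is precisely what connects to the per-episode requirement \eqref{problem_constraint}: reaching $F_{P}$ within $T$ steps corresponds to the episode's output word satisfying the temporally relaxed specification, so \eqref{coreq} is the product-space counterpart of the desired probabilistic constraint-satisfaction guarantee.
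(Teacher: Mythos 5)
Your proof is correct and follows essentially the same route as the paper's: both instantiate Theorem \ref{theorem1} at $p = p_0$, $k = T$ to get $Pr(p_0\xrightarrow{T} F_{P}; \pi_{GO}^{\epsilon}) \geq lb^c[p_0][T]$, then conclude by transitivity with the hypothesis $lb^c[p_0][T] \geq Pr_{des}$. Your extra checks of the theorem's premises (Assumption \ref{assumption_1}, $0 < D^\epsilon(p_0) < \infty$, $T > 0$) are sound bookkeeping that the paper leaves implicit.
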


\begin{proof}
For any $p_0 \in P_{init}$ satisfying $0<D^\epsilon(p_0)<\infty$, by plugging $k=T$ into \eqref{lower_bound}, we obtain $Pr(p_0\xrightarrow{T} F_{P}; \pi_{GO}^{\epsilon}) \geq lb^c[p_0][T]$, which implies \eqref{coreq} when $lb^c[p_0][T] \geq Pr_{des}$.
\end{proof}
\hspace{4pt} 

While the lower bound $lb^c$ can be efficiently computed, it can be overly conservative due to the assumptions made in deriving \eqref{lower_bound}. To address this limitation, we employ a method proposed in \cite{lin2023timewindow} and present Alg.~\ref{alg:recursive}, which computes another lower bound based on recursive computations over the product MDP. While this recursive approach is computationally more demanding, it provides a much less conservative lower bound than \eqref{lower_bound}.

\begin{algorithm2e}[h]
    \SetKwInOut{Input}{Input}
    \SetKwInOut{Output}{Output}
    \Input{\justifying{product MDP $\mathcal{P} = (S_{P},P_{init},A,\Delta_{P}, R_{P}, F_{P})$;\, time bound of $\phi$, i.e., $T$}}
\Output{lower bound of satisfaction probability\textit{$lb^r$[$\cdot$][$\cdot$]}}
\caption{Recursive algorithm for computing the proposed lower bound in \eqref{eq:opt_problem} }\label{alg:recursive}
\DontPrintSemicolon
  \For{k = 0, 1, ..., T}{
     \For{each $p \in S_P$}{
          \If{k = 0} {$lb^r[k][p] \leftarrow 1 $ \textbf{if} p is accepting state, \textbf{else} $lb^r[k][p]\leftarrow 0$}
          \lElseIf{$D^\epsilon(p) > k$} { $lb^r[k][p] \leftarrow 0$}
          \lElseIf{p is accepting state} { $lb^r[k][p] \leftarrow 1$} 
          \Else{
          $lb^r[k][p] \leftarrow $ solve \eqref{eq:opt_problem}
          } 
        }
  }
\end{algorithm2e}
\hspace{-2pt}
\begin{mini!}|s|[2]
    {\Delta_{1},\Delta_{2},\hdots,\Delta_{n}}                               
    {\sum_{i=1}^{n} lb^r[p'_i][k-1] \Delta_{i} \label{eq:eq1}}   
    {\label{eq:opt_problem}} 
    {lb^r[p][k] = }                       
    \addConstraint{\sum_{i=1}^{n} \Delta_{i}}{=1 \label{eq:con1}}    
    \addConstraint{ 1-\epsilon \leq \Delta_{j} }{\leq 1, j = 1, 2, \hdots, m \label{eq:con2}}
     \addConstraint{ 0 < \Delta_{k} }{\leq 1, k = m+1,\hdots,n. \label{eq:con3}}
\end{mini!}


The inputs to Alg.~\ref{alg:recursive} include the product automaton and the time bound of the BTL constraint. It outputs $lb^r[p][k]$, the lower bound of the satisfaction probability from any product automaton state $p$ within $k$ time steps, under policy $\pi^\epsilon_{GO}$. The algorithm is based on the fact that the lower bound $lb^r[p][k]$ depends on $lb^r[p'][k-1]$, where $p'$ is the reachable states from $p$ under $\pi^\epsilon_{GO}$. If the values of $lb^r[p'][k-1]$ are known, we can compute $lb^r[p][k]$, by solving the optimization problem in \eqref{eq:opt_problem}.  Starting from $k=0$ (lines 3-4), we can iteratively solve the optimization problem for $lb^r[p][k]$ up to $k=T$ (lines 5-8).

\begin{figure}[htb!]
    \begin{center}
    \resizebox*{0.5\textwidth}{!}{\includegraphics{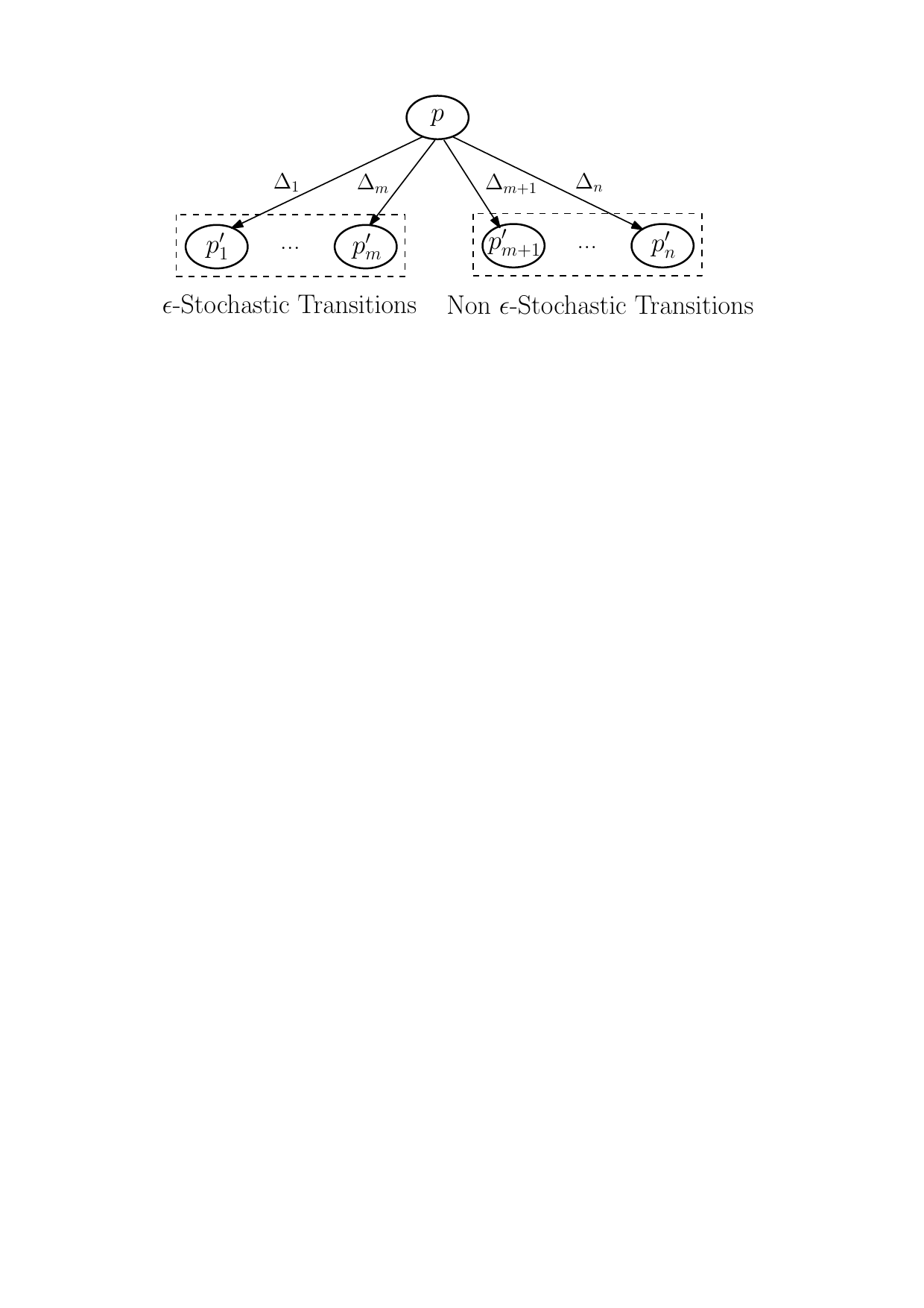}}%
    \caption{Possible transitions from state $p$ under $\pi^\epsilon_{GO}$, where $\Delta_i$ denotes the unknown transition probabilities.}
    \label{fig:opt}  
    \end{center}
    \end{figure}



\setlength{\belowcaptionskip}{-15pt}
\begin{mytheorem}
    \label{proposition2}
For any $p \in S_{P}$ of a given product MDP $\mathcal{P} = (S_{P},P_{init},A,\Delta_{P}, R_{P}, F_{P})$, let integer $k > 0$ denote the remaining time steps, and $Pr(p \xrightarrow{k} F_{P}; \pi_{GO}^{\epsilon})$ be the probability of reaching the set of accepting states from $p$ within the next $k$ time steps under the policy $\pi_{GO}^{\epsilon}$. Then
\begin{equation}\label{lower_bound_r}
       Pr(p\xrightarrow{k} F_{P}; \pi_{GO}^{\epsilon}) \geq lb^r[p][k].
    \end{equation}
\end{mytheorem}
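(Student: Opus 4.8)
The plan is to establish the bound in \eqref{lower_bound_r} by induction on the number of remaining time steps $k$, mirroring the recursive structure of Algorithm \ref{alg:recursive}. The engine of the argument is a first-step analysis of the reaching probability under the stationary policy $\pi_{GO}^{\epsilon}$. Writing $f_k(p) := Pr(p \xrightarrow{k} F_{P}; \pi_{GO}^{\epsilon})$, I would first record the Bellman-type identity, valid for non-accepting $p$ and $k \geq 1$,
$$f_k(p) = \sum_{i=1}^{n}\Delta_{P}(p,a,p'_i)\, f_{k-1}(p'_i), \qquad a = \pi_{GO}^{\epsilon}(p),$$
where $p'_1,\dots,p'_n$ are the states reachable from $p$ under $a$ and $\Delta_{P}(p,a,p'_i)$ are the \emph{true} (unknown) transition probabilities. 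This follows from the Markov property together with the fact that $\pi_{GO}^{\epsilon}$ is stationary, so that once the agent transitions into $p'_i$ it faces the identical reaching problem with $k-1$ steps remaining. The degenerate cases $f_k(p)=1$ for $p \in F_{P}$ and $f_0(p)=0$ for $p \notin F_{P}$ are immediate and coincide with the initialization of the algorithm.

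Next I would carry out the induction itself. The base case $k=0$ matches the algorithm's initialization exactly, so $f_0(p)=lb^r[p][0]$. For the inductive step, I assume $f_{k-1}(p') \geq lb^r[p'][k-1]$ for all $p'$ and split according to the branches of Algorithm \ref{alg:recursive}. If $p$ is accepting, both $f_k(p)$ and $lb^r[p][k]$ equal $1$. If $D^\epsilon(p) > k$, the algorithm sets $lb^r[p][k]=0$, and the inequality $f_k(p)\geq 0$ holds trivially because $f_k(p)$ is a probability. The only substantive branch is $p$ non-accepting with $D^\epsilon(p)\leq k$, in which $lb^r[p][k]$ is the optimal value of \eqref{eq:opt_problem}.

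For that branch, I would combine the first-step recursion with the induction hypothesis, using that the transition probabilities are nonnegative so that replacing each $f_{k-1}(p'_i)$ by the smaller quantity $lb^r[p'_i][k-1]$ can only decrease the sum:
$$f_k(p) = \sum_{i=1}^{n}\Delta_{P}(p,a,p'_i)\,f_{k-1}(p'_i) \geq \sum_{i=1}^{n}\Delta_{P}(p,a,p'_i)\,lb^r[p'_i][k-1] \geq lb^r[p][k].$$
The crucial observation delivering the last inequality is that the true transition-probability vector $(\Delta_{P}(p,a,p'_1),\dots,\Delta_{P}(p,a,p'_n))$ is a \emph{feasible point} of \eqref{eq:opt_problem}: it is nonnegative and sums to one, its $\epsilon$-stochastic components satisfy constraint \eqref{eq:con2}, and its remaining components are strictly positive as in constraint \eqref{eq:con3}. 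Since \eqref{eq:opt_problem} minimizes the same linear functional (with the fixed, already-computed coefficients $lb^r[p'_i][k-1]$) over all feasible vectors, its optimal value is no larger than the functional evaluated at the true vector, which closes the induction. Note this independent per-state minimization is legitimate because the bound only uses the true probabilities of the single transition leaving $p$, so no coupling across states is needed.

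The main obstacle, and the step deserving the most care, is this feasibility link: I must verify that the partition of outgoing transitions into ``$\epsilon$-stochastic'' (indices $1,\dots,m$) versus merely ``positive probability'' (indices $m+1,\dots,n$) used to instantiate \eqref{eq:opt_problem} is exactly the information the agent is assumed to know in Problem \ref{problem_generic} (the sets $\{s' : \Delta_M(s,a,s') > 0\}$ and $\{s'' : \Delta_M(s,a,s'') \geq 1-\epsilon\}$), so that the unknown true probabilities are guaranteed to lie inside the feasible region rather than outside it. A secondary technical point is that the strict inequality in \eqref{eq:con3} leaves the feasible set non-closed, so I would phrase the minimum as an infimum, equivalently the minimum over the closure obtained by relaxing \eqref{eq:con3} to $\Delta_k \geq 0$; this only lowers the optimal value and therefore preserves the lower-bound direction.
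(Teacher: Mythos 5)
Your proof is correct, but it takes a genuinely different route from the paper: the paper's entire proof is a one-line reduction, observing that the statement is a special case of Lemma~1 in \cite{lin2023timewindow} after replacing that lemma's interval constraints on the unknown transition probabilities with constraints \eqref{eq:con2} and \eqref{eq:con3}. You instead prove the bound from scratch by backward induction on $k$, combining (i) the first-step (Bellman) identity for the true reaching probability under the stationary policy $\pi_{GO}^{\epsilon}$, (ii) the inductive hypothesis at horizon $k-1$, and (iii) the key observation that the true transition vector is feasible for \eqref{eq:opt_problem}, so the minimum value of that linear program lower-bounds the true first-step average; you also correctly handle the branches of Alg.~\ref{alg:recursive} (accepting states, $D^\epsilon(p)>k$) and the non-closedness of the feasible set caused by the strict inequality in \eqref{eq:con3}. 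What the paper's route buys is brevity and reuse of an already-verified, more general result; what your route buys is a self-contained argument that makes explicit exactly where the prior knowledge assumed in Problem~\ref{problem_generic} (which successor states have positive probability and which have probability at least $1-\epsilon$) enters, namely in guaranteeing that the unknown true probabilities lie inside the feasible region of \eqref{eq:opt_problem}. In substance your induction is presumably the same mechanism underlying the cited Lemma~1, so the two proofs are complementary rather than in conflict.
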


\begin{proof}
This result is a special case of Lemma 1 in \cite{lin2023timewindow}, where we substitute the constraint $\Delta_{min}(p_i^{t},a,p^{t+1}_j) \leq \hat{\Delta}_{j} \leq \Delta_{max}(p_i^{t},a,p^{t+1}_j)$ ((8) in \cite{lin2023timewindow}) with constraints \eqref{eq:con2}, \eqref{eq:con3}.
\end{proof}

 \begin{mycorollary}\label{corollary_2}
 Given a product MDP $\mathcal{P} = (S_{P},P_{init},A,\Delta_{P}, R_{P}, F_{P})$,  any initial state $p_0 \in P_{init}$ such that  $lb^r[p_0][T] \geq Pr_{des}$ satisfies  
 \begin{equation}\label{coreq2}
       Pr(p_0\xrightarrow{T} F_{P}; \pi_{GO}^{\epsilon}) \geq Pr_{des}.
    \end{equation}
 \end{mycorollary}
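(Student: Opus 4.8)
The plan is to derive this corollary as a one-line consequence of Theorem~\ref{proposition2}, mirroring exactly the structure of the proof of Corollary~\ref{corollary_1}. Theorem~\ref{proposition2} already establishes the general lower bound $Pr(p\xrightarrow{k} F_{P}; \pi_{GO}^{\epsilon}) \geq lb^r[p][k]$ for every product-MDP state $p$ and every positive horizon $k$, so no new probabilistic reasoning is required here; the corollary is merely the instantiation of that bound at the specific initial state $p_0$ and the specific horizon $k=T$.

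Concretely, I would first fix an arbitrary $p_0 \in P_{init}$ satisfying the hypothesis $lb^r[p_0][T] \geq Pr_{des}$. Applying the inequality \eqref{lower_bound_r} of Theorem~\ref{proposition2} with $p = p_0$ and $k = T$ (which is admissible since $T = \|\phi\|$ is a positive integer) yields
\begin{equation*}
    Pr(p_0 \xrightarrow{T} F_{P}; \pi_{GO}^{\epsilon}) \geq lb^r[p_0][T].
\end{equation*}
Chaining this with the assumed bound $lb^r[p_0][T] \geq Pr_{des}$ by transitivity of $\geq$ gives $Pr(p_0 \xrightarrow{T} F_{P}; \pi_{GO}^{\epsilon}) \geq Pr_{des}$, which is precisely \eqref{coreq2}.

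There is essentially no obstacle in this step, since the entire analytical burden has been discharged upstream: the recursion validity and the lower-bound property of $lb^r[\cdot][\cdot]$ computed by Alg.~\ref{alg:recursive} are guaranteed by Theorem~\ref{proposition2}, itself inherited as a special case of Lemma~1 in \cite{lin2023timewindow}. The only point worth stating carefully is that, unlike Corollary~\ref{corollary_1}, this corollary carries no side condition such as $0 < D^\epsilon(p_0) < \infty$; the recursive bound $lb^r$ is defined for all states (including those already accepting, where it equals $1$, and those with $D^\epsilon(p) > k$, where it equals $0$), so the hypothesis $lb^r[p_0][T] \geq Pr_{des}$ alone suffices and no reachability caveat need be imposed.
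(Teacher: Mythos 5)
Your proposal is correct and follows exactly the same route as the paper's own proof: instantiate the bound \eqref{lower_bound_r} from Theorem~\ref{proposition2} at $p = p_0$ and $k = T$, then chain with the hypothesis $lb^r[p_0][T] \geq Pr_{des}$ by transitivity. Your additional remark that no reachability side condition (as in Corollary~\ref{corollary_1}) is needed here is also accurate, since $lb^r$ is defined for all states by Alg.~\ref{alg:recursive}.
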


 \begin{proof}
For any $p_0 \in P_{init}$, by plugging $k=T$ into \eqref{lower_bound_r}, we obtain $Pr(p_0\xrightarrow{T} F_{P}; \pi_{GO}^{\epsilon}) \geq lb^r[p_0][T]$, which implies \eqref{coreq2} when $lb^r[p_0][T] \geq Pr_{des}$.
 \end{proof}

\subsection{A Switching-based RL Algorithm}

The switching algorithm allows a probabilistic transition between two distinct policies: the $\pi_{GO}^{\epsilon}$ policy for constraint satisfaction and a learned policy for reward maximization. We hereby define the switching policy as follows.

 \begin{mydefinition}[Switching Policy] For any episode starting with initial state $p \in P_{init}$, the switching policy is defined as adopting $\pi_{GO}^{\epsilon}$ policy with a probability of $Pr_{switch}(p)$ and RL with a probability of $1-Pr_{switch}(p)$ throughout that episode, where $Pr_{switch}(p)$ is the switching probability for state $p$.
 \end{mydefinition}

 To determine the switching probability $Pr_{switch}(p)$ for each initial state $p$, we initialize $Pr_{switch}(p)$ to 1 so that the agent adopts $\pi_{GO}^{\epsilon}$ policy with probability 1 in the early stage of the learning process. Let $Pr_{GO}(p)$ denote the probability of satisfaction under policy $\pi_{GO}^{\epsilon}$ starting from an initial state $p$. By estimating $Pr_{GO}(p)$, we can adjust the switching probability such that: 1) 
$Pr_{switch}(p)$ is lower than 1 (to allow exploration for reward maximization) if we are confident that $Pr_{GO}(p)$ is greater than the desired threshold $Pr_{des}$; 2) $Pr_{switch}(p)$ remains 1 (to maximize constraint satisfaction) if we are not confident that $Pr_{GO}(p)$ is greater than $Pr_{des}$. 

Since $\pi_{GO}^{\epsilon}$ is a stationary policy, for any initial state $p$, the outcome of following $\pi_{GO}^{\epsilon}$ (either satisfies or violates the constraint) is a Bernoulli trial with the probability of success (constraint satisfaction) equal to $Pr_{GO}(p)$.  Accordingly, we use Wilson score interval \cite{wilson1927probable}, to compute a confidence bound [$Pr_{low}(p)$, $Pr_{up}(p)$] that contains $Pr_{GO}(p)$ up to some given confidence level, where  

\begin{equation} \label{upper} Pr_{up}(p) =  \frac{n_S(p)+\frac{1}{2}z^2}{n(p)+z^2} + \frac{z}{n(p)+z^2}\sqrt{\frac{n_S(p) n_F(p)}{n(p)}+\frac{z^2}{4}},
\end{equation}

 \begin{equation}\label{lower} 
 Pr_{low}(p) =  \frac{n_S(p)+\frac{1}{2}z^2}{n(p)+z^2} - \frac{z}{n(p)+z^2}\sqrt{\frac{n_S(p) n_F(p)}{n(p)}+\frac{z^2}{4}}.
\end{equation}


Here, $n(p)$ denotes the total number of episodes the agent started at $p$ and adopted $\pi^\epsilon_{GO}$, $n_S(p)$ is the number of those episodes that satisfied the constraint under $\pi^\epsilon_{GO}$, $n_F(p)$ is the number of episodes that violated the constraint under $\pi^\epsilon_{GO}$, i.e., $n(p)=n_F(p)+ n_S(p)$. The value of $z$ is determined by the desired confidence level (e.g., 99\% confidence level corresponds to a $z$ value of 2.58), i.e, the probability that [$Pr_{low}(p)$, $Pr_{up}(p)$] contains $Pr_{GO}(p)$. Accordingly, we can select a high value of $z$ to ensure that $Pr_{GO}(p) \geq Pr_{low}(p)$ with high confidence. We update the lower bound $Pr_{low}(p)$ at the end of each episode. The resulting $Pr_{low}(p)$ is then used to update the switching probability $Pr_{switch}(p)$ of the initial state $p$.
 
 If $Pr_{low}(p)$ is less than the desired threshold $Pr_{des}$ (indicating a risk of violating the constraint), the switching probability should be set to 1 to ensure that the algorithm always employs $\pi_{GO}^{\epsilon}$ when starting at the initial state $p$. If $Pr_{low}(p)$ exceeds $Pr_{des}$, indicating a high likelihood of constraint satisfaction by $\pi_{GO}^{\epsilon}$ policy, the switching probability can be set lower than 1. This adjustment allows for executing pure RL with a non-zero probability to enhance reward maximization. The estimation of the satisfaction probability becomes more accurate as the number of episodes increases, and the algorithm reduces the switching probability as needed to achieve reward maximization. 
 
 Note that in any episode starting at $p$, the probability of satisfying the constraint is lower bounded by $Pr_{switch}(p)Pr_{GO}(p)$, i.e., the probability of choosing $\pi_{GO}^{\epsilon}$ in that episode times the probability of satisfying the constraint from that initial state via $\pi_{GO}^{\epsilon}$. Accordingly, we propose to update the switching probability as
 \begin{equation}
 \label{switching}
    Pr_{switch}(p) = min\left(1 , \frac{Pr_{des}}{Pr_{low}(p)}\right),
 \end{equation}
which ensures that the product, $Pr_{switch}(p)Pr_{GO}(p)$, a lower bound on the probability of satisfying the constraint starting at $p$, is at least $Pr_{des}$ as long as 1) $Pr_{GO}(p) \geq Pr_{des}$, and 2)  $Pr_{GO}(p) \geq Pr_{low}(p)$, which can be achieved with very high confidence via a proper selection of $z$  in \eqref{lower}.
 

At the beginning of each episode,  the agent decides whether to adhere to the $\pi_{GO}^{\epsilon}$ policy for constraint satisfaction or to employ RL for maximizing rewards. This decision is based on the calculated switching probability $Pr_{switch}(\cdot)$. The design of $Pr_{switch}$ in \eqref{switching} ensures that: 1) the agent exclusively follows the $\pi_{GO}^{\epsilon}$ policy when the confidence lower bound $Pr_{low}$ is lower than the desired threshold $Pr_{des}$; 2) the agent is allowed to engage in  RL for reward maximization when $Pr_{low}$ exceeds $Pr_{des}$ (as presented in Alg.~\ref{alg:learning}). 

\begin{algorithm2e}[!htbp] \small
    \DontPrintSemicolon
    \SetKwInOut{Input}{Input}
    \SetKwInOut{Output}{Output}
    \Input{
         product MDP $\mathcal{P} = (S_{P},P_{init},A,\Delta_{P}, R_{P}, F_{P})$; initial MDP state $s_{init}$; $\pi_{GO}^{\epsilon}$ policy; time bound of $\phi$, i.e., $T$}
    \Output{$\pi:S_{P} \rightarrow A$;\,\,$Pr_{switch}(\cdot)$}
    \SetKwFunction{FMain}{update}
    \SetKwProg{Pn}{Function}{:}{\KwRet}
    \textbf{Initialization:} 
    $n(p) \leftarrow 0, n_S(p) \leftarrow 0, n_F(p) \leftarrow 0$ for all $p \in P_{init}$\\
     \textbf{Initialization:} 
    $p \leftarrow find\,\, \bar{p}\in P_{init}\,\, s.t. \,\,mdp\_state(\bar{p})=s_{init}$\\
    \For{$j = 0:N_{episode}-1$}{
        $p_{0} \leftarrow p$\\
         \lIf{$n(p_0) < N_{sample}$ or random() $< Pr_{switch}(p_{0})$ }{$flag_{RL}$ $\leftarrow 0$} 
        \lElse{$flag_{RL}$ $\leftarrow$ 1}
        \color{black}
        \For{$t = 0:T-1$}{
            \uIf{constraint not satisfied \textbf{and} $flag_{RL}$ = $0$ }{
                Action $a$ $\leftarrow$ $\pi_{GO}^{\epsilon}(p)$\\
                Take action $a$, observe the next state 
            }
            \ElseIf{constraint satisfied  \textbf{or} $flag_{RL} = 1$}{
               Update $\pi$ via a selected RL algorithm }}
            \If{constraint satisfied}{
                $\FuncSty{update(}p_{0}, n(p_{0}),n_S(p_{0}),n_F(p_{0}),`success\text{'}\FuncSty{)}$
                } 
            \Else{
                $\FuncSty{update(}p_{0}, n(p_{0}),n_S(p_{0}),n_F(p_{0}),`failure\text{'}\FuncSty{)}$
            }  
           }
    \Pn{\FMain{$p_{0}$, $n(p_{0})$, $n_S(p_{0})$, $n_F(p_{0})$, $result$}}{ 
    $n(p_{0}) \leftarrow n(p_{0}) + 1$\\
    $n_S(p_{0}) \leftarrow n_S(p_{0}) + 1 $ \textbf{if} $result$ = `success', \textbf{else} $n_F(p_{0}) \leftarrow n_F(p_{0}) + 1$\\
    $Pr_{low}(p_{0}) \leftarrow$ equation (\ref{lower})\\
    $Pr_{switch}(p_{0}) \leftarrow$ equation (\ref{switching})
    }
    \caption{Switching-based RL}
     \label{alg:learning}
    \end{algorithm2e}  

Algorithm \ref{alg:learning} begins by initializing the numbers of trials, successes, and failures for every initial state in $ P_{init}$ (line 1). Line 2 sets the initial product MDP state. Before each episode, the algorithm records the initial state of the current episode (line 4) and determines whether to follow the $\pi_{GO}^{\epsilon}$ policy or to adopt RL (lines 5-6).  
In line 5, the condition $n(p_0) < N_{sample}$ is included to ensure enough samples have been collected for accurate estimation of the confidence lower bound. In situations where $\pi_{GO}^{\epsilon}$ policy is selected but the constraint has not yet been satisfied, the agent will take actions from the $\pi_{GO}^{\epsilon}$ policy (lines 8-10). Conversely, if the constraint has been satisfied or RL is selected, the agent uses a chosen RL algorithm to update the reward maximization policy $\pi$ (line 12). Example RL algorithms that can be used in line 12, such as Tabular Q-learning (\cite{watkins1992}) and Deep Q-learning (\cite{mnih2013playing}), are presented in Algs. 4 and 5. 
At the end of each episode, the algorithm will check if the BTL constraint is satisfied, and update the numbers of trials, successes, failures, and the switching probability for the initial state $p_0$ (lines 13-16), using the function \funcstyle{update()}. 


\begin{mytheorem} \label{theorem2}
    Given a BTL constraint $\phi$ with a desired probability threshold $Pr_{des}$ and a product MDP $\mathcal{P} = (S_{P},P_{init},A,\Delta_{P}, R_{P}, F_{P})$, if $Pr(p_0\xrightarrow{T} F_{P}; \pi_{GO}^{\epsilon}) \geq Pr_{des}$ \footnote{In practice, the proposed lower bounds, $lb^r$ or $lb^c$, can be used to verify this inequality as shown in Corollaries \ref{corollary_1} and \ref{corollary_2}. } for every initial state $p_0 \in P_{init}$, then  Alg.~\ref{alg:learning} guarantees that the probability of satisfying $\phi$ in each episode is at least $Pr_{des}$ with high confidence\footnote{Theorem~\ref{theorem2} does not claim that the probability of satisfaction is always greater than or equal to $Pr_{des}$. Instead, we ensure this probabilistic satisfaction guarantee with high confidence. This is because $Pr_{low}$ was estimated using the Wilson score method, which means that $Pr_{GO}(p_0) \geq Pr_{low}(p_0)$ (needed in \eqref{t3eq} in the proof) holds true with a high confidence level depending on the chosen parameter $z$ in \eqref{lower}.}. 
\end{mytheorem}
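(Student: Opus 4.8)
The plan is to show that for an arbitrary episode beginning at an initial state $p_0 \in P_{init}$, the probability that the constraint is satisfied is lower bounded by the product $Pr_{switch}(p_0)\,Pr_{GO}(p_0)$, and then to verify that this product is at least $Pr_{des}$ under the stated hypothesis. First I would condition on the branch taken in lines 5--6 of Alg.~\ref{alg:learning}: with probability at least $Pr_{switch}(p_0)$ the agent sets $flag_{RL}=0$ and therefore follows $\pi_{GO}^{\epsilon}$ (until the constraint is met, after which it may switch to RL without affecting whether the episode has already satisfied $\phi$), so that the conditional satisfaction probability equals $Pr_{GO}(p_0)=Pr(p_0\xrightarrow{T}F_{P};\pi_{GO}^{\epsilon})$ by definition. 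Dropping the (nonnegative) contribution of the RL branch yields the episode-level lower bound $Pr_{switch}(p_0)\,Pr_{GO}(p_0)$. I would also remark that the extra condition $n(p_0)<N_{sample}$ in line 5 only increases the chance of selecting $\pi_{GO}^{\epsilon}$, so it cannot decrease this bound.

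Next I would substitute the definition of the switching probability from \eqref{switching} and split into two cases. If $Pr_{low}(p_0)\le Pr_{des}$, then $Pr_{switch}(p_0)=1$ and the bound becomes $Pr_{GO}(p_0)$, which is at least $Pr_{des}$ directly by the theorem's hypothesis $Pr(p_0\xrightarrow{T}F_{P};\pi_{GO}^{\epsilon})\ge Pr_{des}$. If instead $Pr_{low}(p_0)>Pr_{des}$, then $Pr_{switch}(p_0)=Pr_{des}/Pr_{low}(p_0)$ and the bound becomes $\frac{Pr_{des}}{Pr_{low}(p_0)}\,Pr_{GO}(p_0)$. Here I would invoke the key inequality $Pr_{GO}(p_0)\ge Pr_{low}(p_0)$, which lets me cancel $Pr_{low}(p_0)$ and conclude that the bound is at least $Pr_{des}$. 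Combining the two cases shows the per-episode satisfaction probability is at least $Pr_{des}$.

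The subtle point --- and the reason for the ``with high confidence'' qualifier rather than a deterministic guarantee --- is the inequality $Pr_{GO}(p_0)\ge Pr_{low}(p_0)$ used in the second case. Since $Pr_{low}(p_0)$ is the Wilson score lower endpoint computed from the Bernoulli outcomes of past $\pi_{GO}^{\epsilon}$ trials, it contains the true success probability $Pr_{GO}(p_0)$ only up to the confidence level set by $z$ in \eqref{lower}. I would therefore state the conclusion as holding with that confidence, noting that $z$ can be chosen large enough to make $Pr_{GO}(p_0)\ge Pr_{low}(p_0)$ hold with arbitrarily high probability, and flag this as the only non-deterministic ingredient of the argument; the remainder is a deterministic verification that the construction of \eqref{switching} was tailored precisely so that the product $Pr_{switch}\,Pr_{GO}$ meets the threshold in both regimes.
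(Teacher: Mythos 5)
Your proposal is correct and follows essentially the same argument as the paper: the same lower bound $Pr_{switch}(p_0)\,Pr_{GO}(p_0)$, the same two-case split on whether $Pr_{low}(p_0)$ exceeds $Pr_{des}$ (using the hypothesis directly in the first case and the Wilson-score inequality $Pr_{GO}(p_0)\geq Pr_{low}(p_0)$ to cancel $Pr_{low}(p_0)$ in the second), and the same identification of that Wilson-score step as the sole source of the ``with high confidence'' qualifier. Your added remarks about the $n(p_0)<N_{sample}$ condition and the harmlessness of switching to RL after satisfaction are fine details the paper leaves implicit, but they do not change the structure of the proof.
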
 

\begin{proof} For each episode starting from an initial state $p_0$, the agent selects a policy to follow in that episode according to the switching probability $Pr_{switch}(p_0) = min\left(1 , \frac{Pr_{des}}{Pr_{low}(p_0)}\right)$.
 
 \vspace{1mm}
 \textbf{Case 1:} If $Pr_{des} \geq Pr_{low}(p_0)$, then $Pr_{switch} = 1$. The agent will adopt the $\pi_{GO}^{\epsilon}$ policy with probability 1. Accordingly, if $Pr(p_0\xrightarrow{T} F_{P}; \pi_{GO}^{\epsilon}) \geq Pr_{des}$, then the probability of satisfying $\phi$ in such an episode is at least $Pr_{des}$.

 \vspace{1mm}
 \textbf{Case 2:} If $Pr_{des} < Pr_{low}(p_0)$, then $Pr_{switch} = \frac{Pr_{des}}{Pr_{low}(p_0)} < 1$. The agent adopts $\pi_{GO}^{\epsilon}$ policy with probability $Pr_{switch}$ and RL with probability $1-Pr_{switch}$. 
 Starting from any initial state $p_0$, let $Pr_{sat}(p_0)$ represent the overall probability of satisfying the constraint in this episode, with $Pr_{GO}(p_0)$ denoting the satisfaction probability under $\pi_{GO}^{\epsilon}$ policy, and $Pr_{RL}(p_0)$ denoting the satisfaction probability under RL. Then,
 \begin{equation}
 \label{t3eq}
 \begin{split}
     Pr_{sat}(p_0) & = Pr_{GO}(p_0)\cdot Pr_{switch} + Pr_{RL}(p_0)\cdot (1-Pr_{switch}) \\
    & \geq Pr_{GO}(p_0)\cdot Pr_{switch}\\
    & \geq Pr_{low}(p_0)\cdot Pr_{switch} = Pr_{des}.
  \end{split}
 \end{equation}
\end{proof}

 


\section{Simulation Results}
    We present some case studies to validate the proposed algorithm and compare it with  \cite{aksaray2021probabilistically}, which serves as a baseline.
    The simulation results are implemented on Python 3.10 on a PC with an Intel i7-10700K CPU at 3.80 GHz processor and 32.0 GB RAM. We consider a robot operating on an 8×8 grid. The robot's action set is $A = \{N,NE,E,SE,S,SW,W,NW,Stay\}$, and the possible transitions under each action are shown in Fig.~\ref{fig:transition}. Action ``Stay" results in staying at the current position with probability 1. Any other action leads to the intended transition (blue) with a probability of 90\% and unintended transitions (yellow) with 10\%. 
    While this transition model is unknown to the robot, a conservative transition uncertainty $\epsilon \geq 0.1$ is provided, representing the robot's partial knowledge of the actual transition dynamics ($\epsilon=0.1$ is the actual transition uncertainty).

    \begin{figure}[htb!]
    \begin{center}
    \resizebox*{0.3\textwidth}{!}{\includegraphics{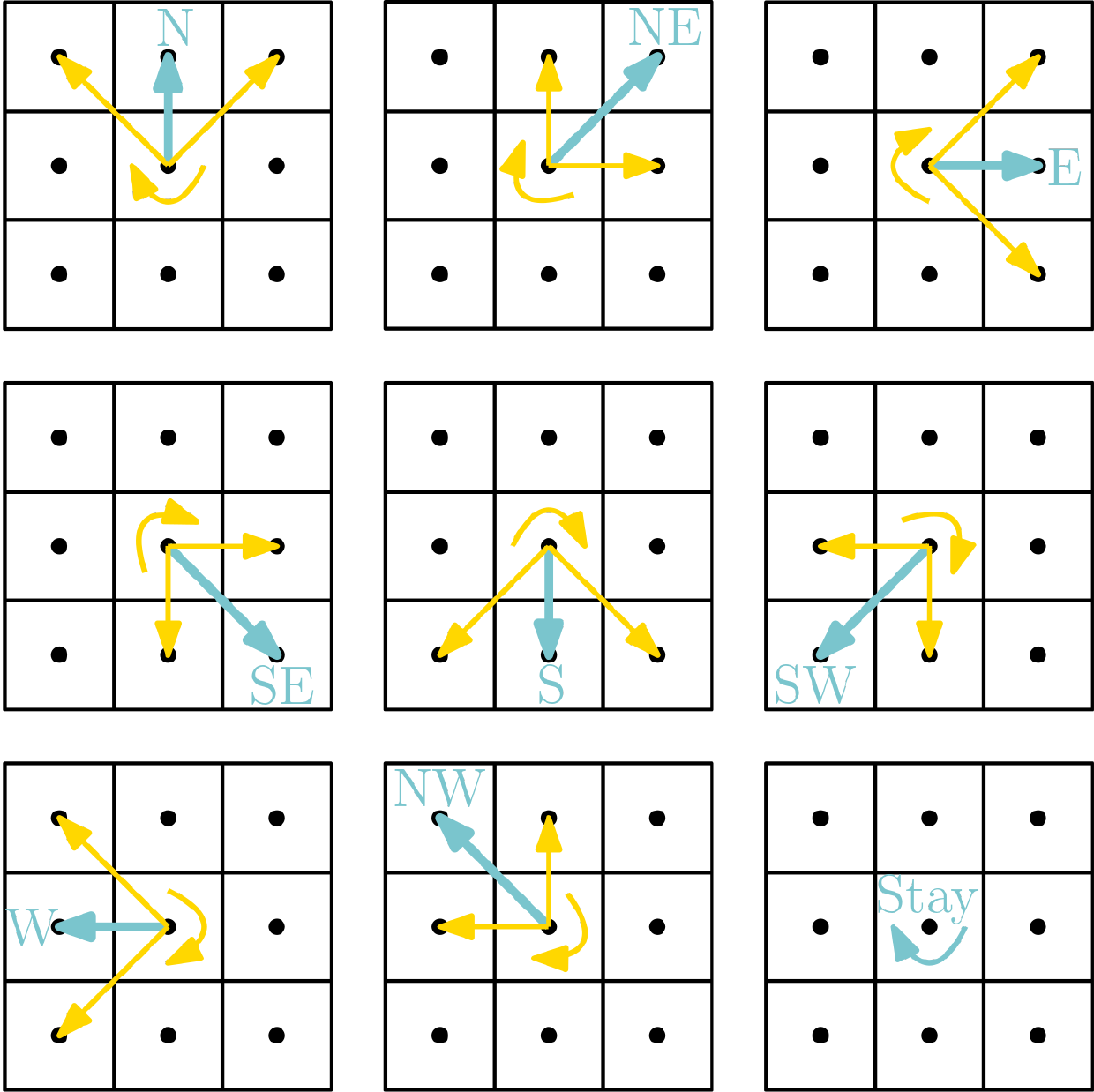}}%
    \caption{Transitions (intended - blue, unintended - yellow) under each action.}
    \label{fig:transition}  
    \end{center}
    \end{figure}

   We consider a scenario where the robot periodically performs a pickup and delivery task while monitoring high reward regions in the environment. In Fig.~\ref{fig:case1}, the light gray, dark gray, and all other cells yield a reward of 1, 10, and 0, respectively. The pickup and delivery task is formalized using a TWTL formula: $[H^1P]^{[0,20]}\cdot([H^1D_1]^{[0,20]} \lor [H_1D_2]^{[0,20]})\cdot[H^1Base]^{[0,20]}$, which specifies that the robot must ``reach the pickup location $P$ and stay there for 1 time step within the first 20 time steps, then immediately reach one of the delivery locations, $D_1$ or $D_2$, and stay there for 1 time step within the next 20 time steps; afterward, return to the Base and stay for 1 time step, within 20 steps." Based on the time bound of the formula, each episode's length is set at 62 time steps.

    \textbf{Case 1.} 
    We illustrate sample trajectories in Fig.~\ref{fig:case1} using the policy learned by the algorithm in \cite{aksaray2021probabilistically} (`baseline') and the proposed algorithm (with tabular-Q learning) after training for 40,000 episodes. The proposed algorithm switches between two different behaviors based on the selected mode while the baseline finds a single behavior that satisfies the constraint and maximizes the reward. Both algorithms not only satisfy the TWTL constraint with a desired probability but also effectively explore the high-reward regions. 

\begin{figure*}[htp]
    \centering
    \subfigure[]{%
      \includegraphics[clip,width=0.25\textwidth]{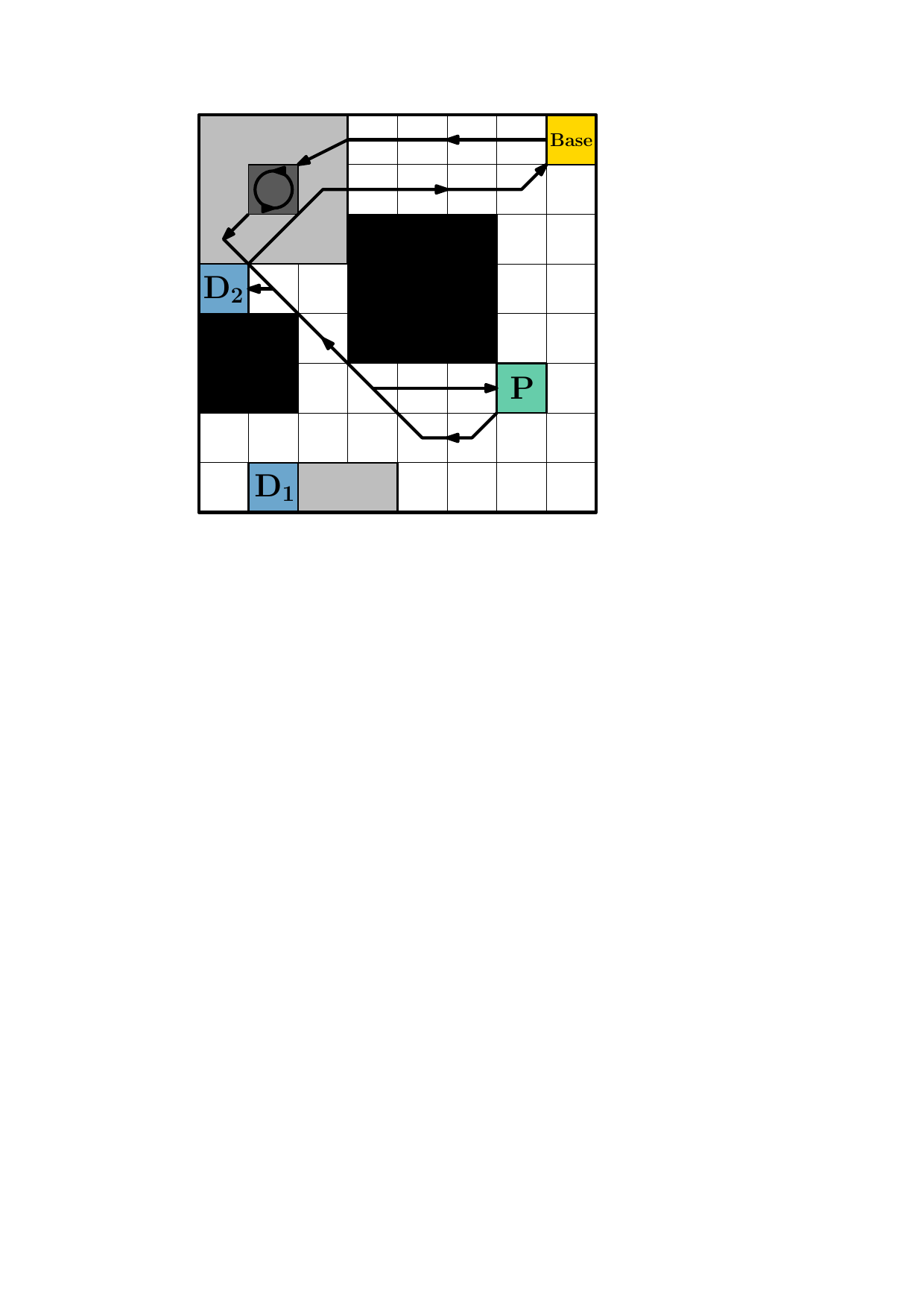}%
    }
    \hspace{1cm}
    \subfigure[]{%
      \includegraphics[clip,width=0.25\textwidth]{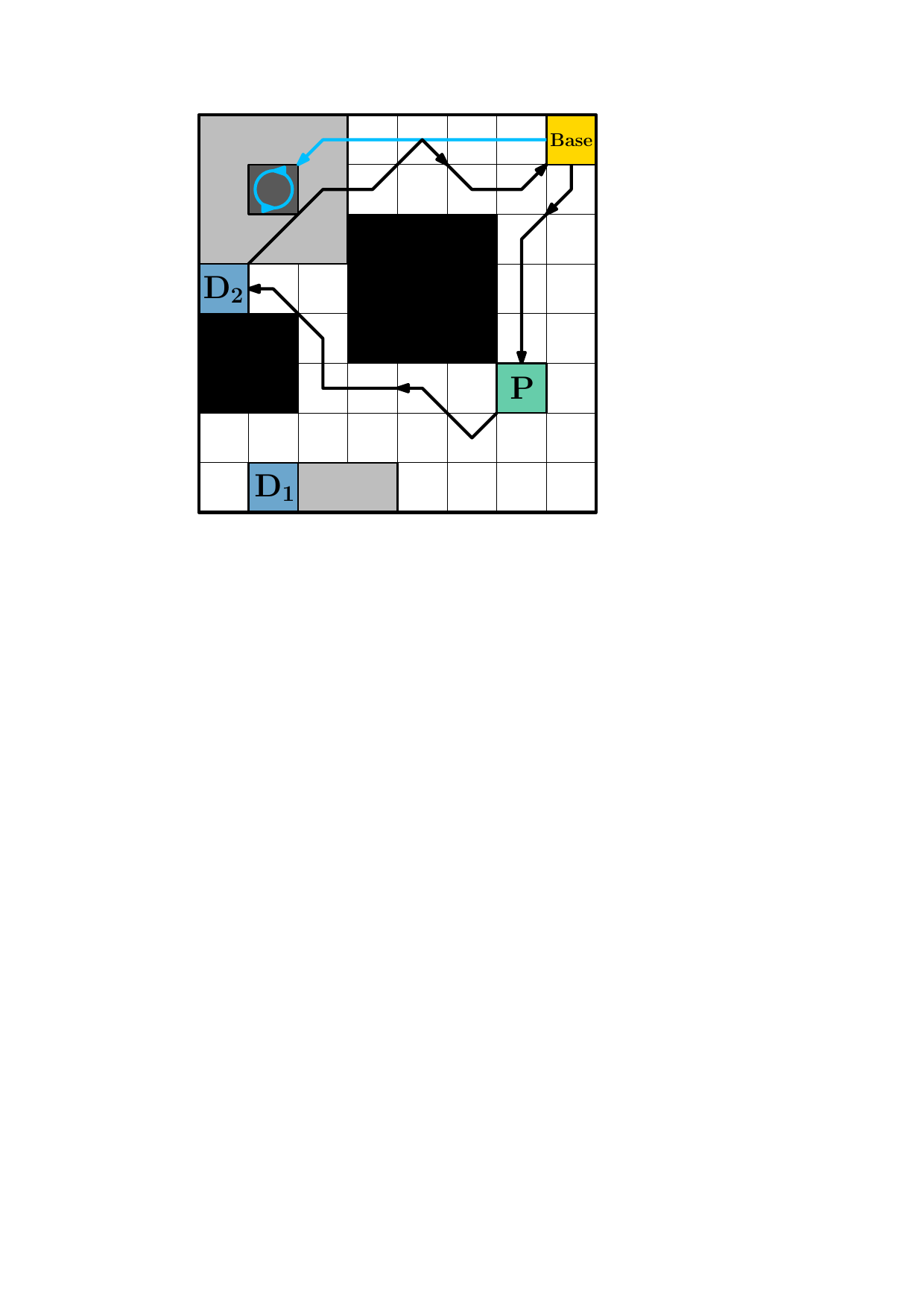}%
    }
    
    \caption{An environment where yellow, green, blue, and black cells are respectively the base station, the pick-up region, the delivery regions, and the obstacles. The gray cells are reward regions (darker shades - higher reward). The arrows denote illustrative trajectories that are obtained by applying policies learned by: (a)  \cite{aksaray2021probabilistically}, (b) the proposed algorithm (blue: reward maximization policy $\pi$, black: $\pi_{GO}^{\epsilon}$ policy).}
    \label{fig:case1}
\end{figure*}

In Cases 2 and 3, we consider a fixed number of episodes ($N_{episode}=1000$) and diminishing $\epsilon$-greedy policy in RL algorithms (with $\epsilon_{init}=0.7$ and $\epsilon_{final}=0.0001$). In this way, we compare the performances of both algorithms under fixed learning episodes and exploration/exploitation behavior. The learning rate and discount factor are set to 0.1 and 0.95, respectively. We set the $z$ score to 2.58 to ensure the probabilistic constraint satisfaction with high confidence. Each algorithm was run through 10 independent training sessions. For each run, the rewards and satisfaction rates were smoothed using a moving window average. The solid lines represent the average reward and satisfaction rate at each episode, calculated as the mean of the moving window averages from all 10 runs. The upper and lower bounds of the shaded areas indicate the maximum and minimum moving window averages over the 10 runs at each episode.

\textbf{Case 2.} We evaluated the baseline and the proposed algorithm (with Tabular Q-learning and Deep Q-learning)  
under varying $Pr_{des}$. As shown in Fig.~\ref{fig:case2}, the baseline tends to be over-cautious and enforces a higher satisfaction rate in all cases, while our proposed algorithm effectively balances constraint satisfaction with reward maximization, with the satisfaction rate adaptively aligned with the desired threshold. As $Pr_{des}$ increases, we notice a decrease in the collected rewards in both algorithms, due to a more restrictive constraint.

\begin{figure*}[htp]
\centering
\subfigure[$Pr_{des}=0.9, \epsilon = 0.2$]{%
  \includegraphics[clip,width=0.48\textwidth]{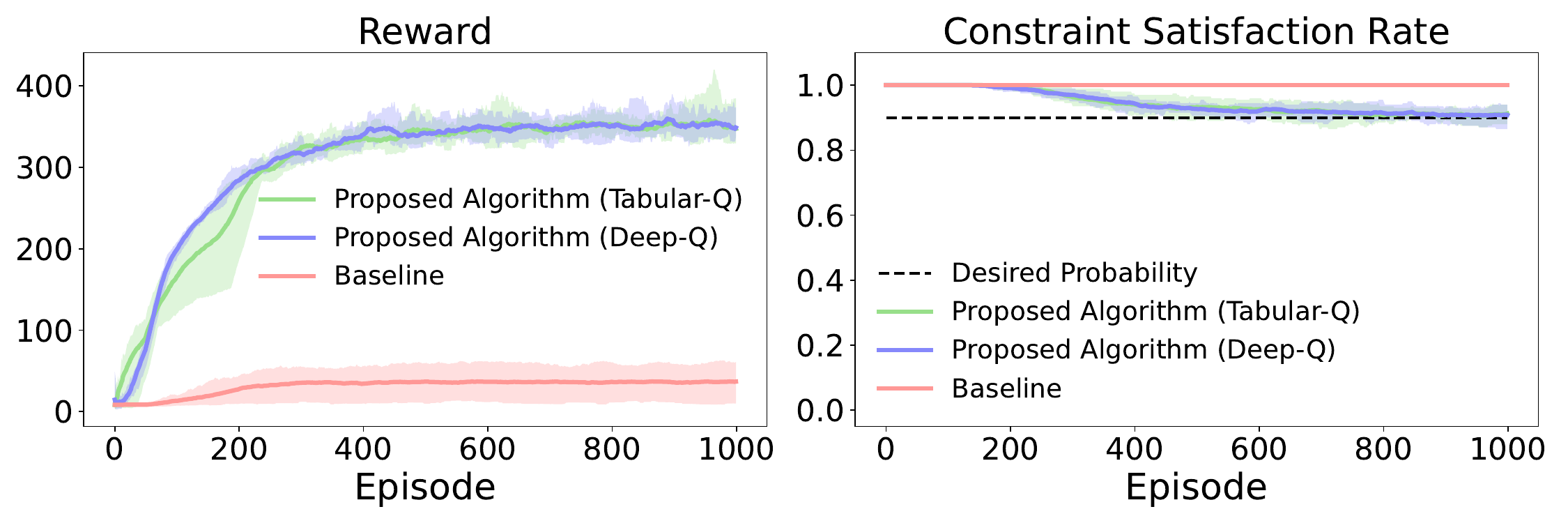}%
}
\subfigure[$Pr_{des}=0.7, \epsilon = 0.2$]{%
  \includegraphics[clip,width=0.48\textwidth]{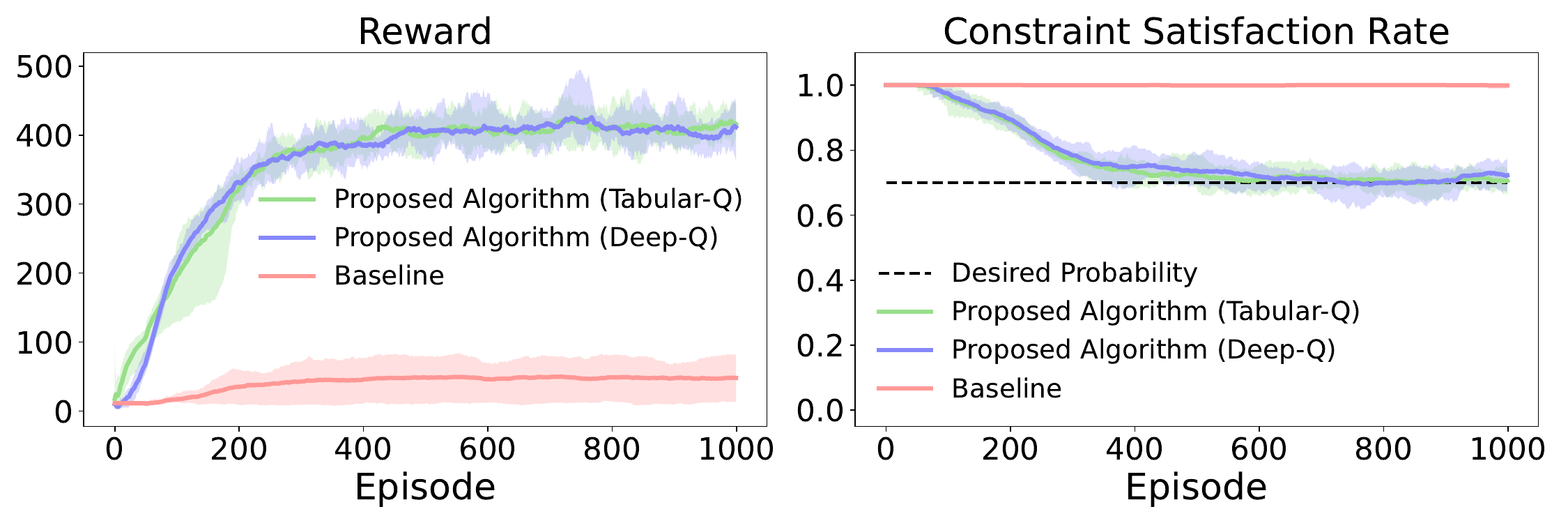}%
}

\caption{Reward and constraint satisfaction rate under various desired probabilities $Pr_{des}$.}
\label{fig:case2}
\end{figure*}

 \textbf{Case 3.} This case study investigates the impact of the parameter $\epsilon$ on the performance of both algorithms. 
 As in Fig.~\ref{fig:case3}, the baseline's performance is significantly affected by $\epsilon$; a higher $\epsilon$ leads to reduced reward collection and an increased satisfaction rate. This difference arises because the baseline uses $\epsilon$ to compute the lower bound of satisfaction and prune actions accordingly, and thus a larger $\epsilon$ will result in a more restricted action set. In contrast, the proposed algorithm is designed independently of $\epsilon$ by adapting the switching behavior based on collected data, rather than relying on the overestimated $\epsilon$. Consequently, the performance of the proposed algorithm remains largely unaffected by the change of $\epsilon$ in the experiments.

 \begin{figure*}[htp]
\centering
\subfigure[$Pr_{des}=0.6, \epsilon = 0.2$\label{fig:case3a}]{%
  \includegraphics[clip,width=0.48\textwidth]{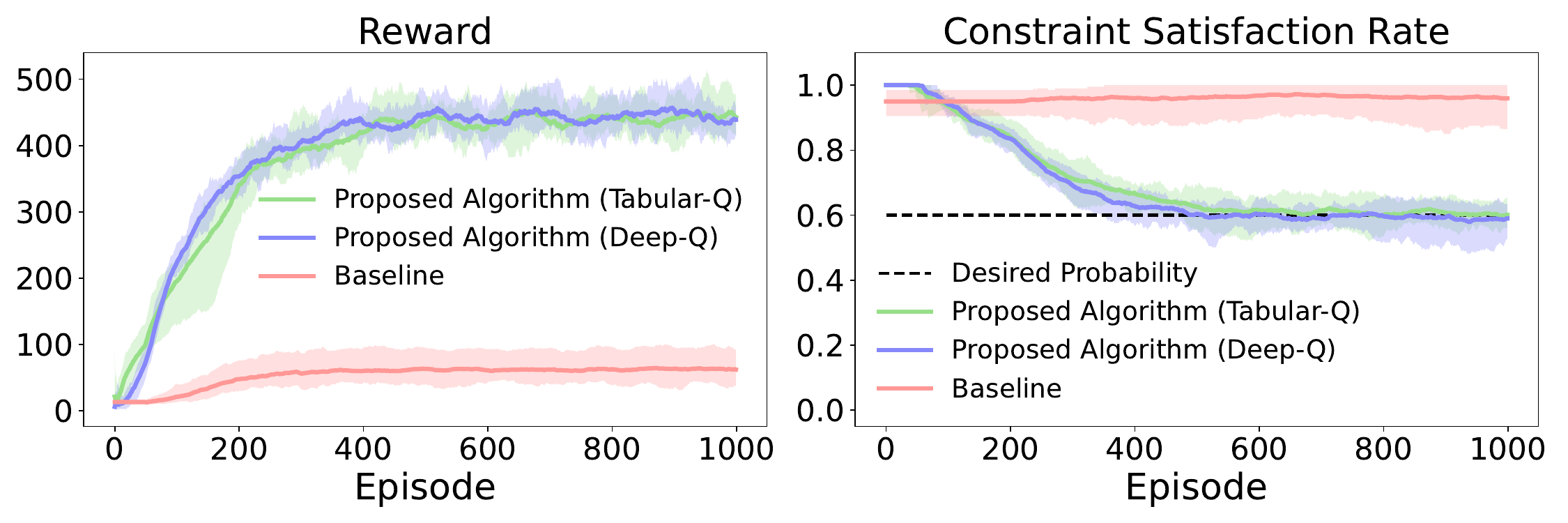}%
}
\subfigure[$Pr_{des}=0.6, \epsilon = 0.1$\label{fig:case3b}]{%
  \includegraphics[clip,width=0.48\textwidth]{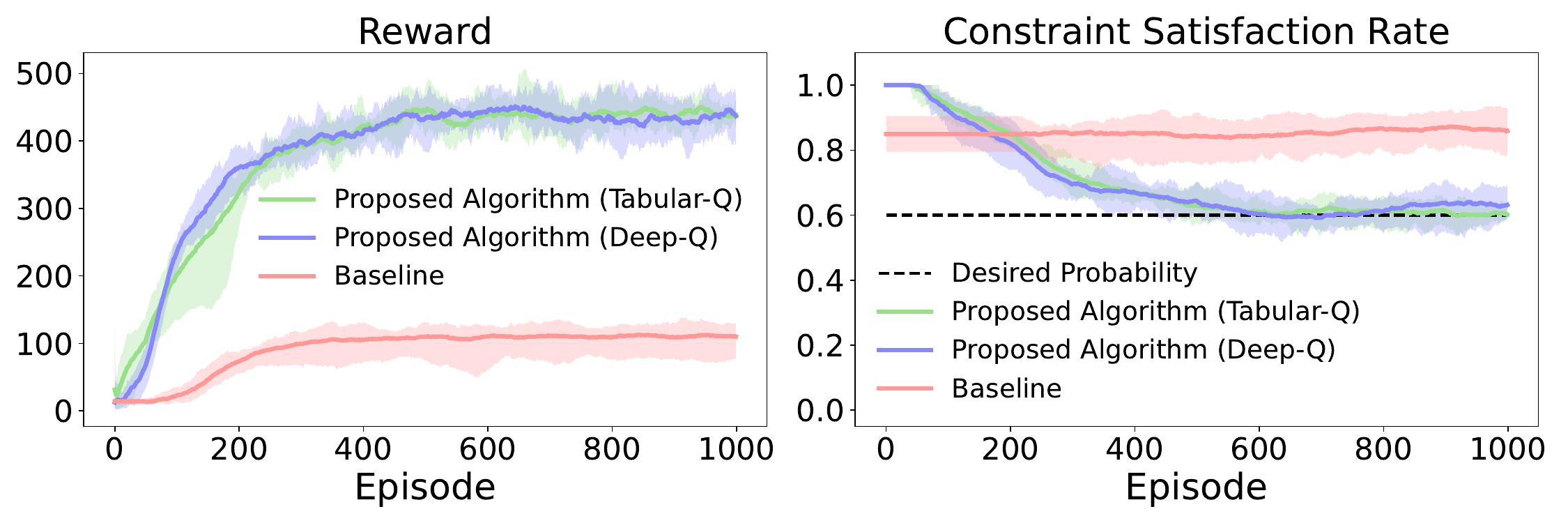}%
}

\caption{Reward and constraint satisfaction rate under different uncertainties $\epsilon$.}
\label{fig:case3}
\end{figure*}

The proposed algorithm consistently outperforms the baseline in maximizing rewards for two main reasons: 1) The proposed solution separates reward maximization from constraint satisfaction, allowing it to learn the reward-maximizing policy on a significantly smaller MDP, whereas the baseline operates on a much larger time-product MDP; 2) While the baseline tends to over-satisfy the TWTL constraint, the proposed solution adaptively aligns with the desired probability by adjusting the switching probability, providing more freedom to explore and maximize the rewards. In general, the proposed algorithm is also more efficient, requiring significantly fewer training episodes to converge compared to the baseline.

  \textbf{Case 4.} We compare the closed-form solution \eqref{lower_bound} with the recursive one (Alg. \ref{alg:recursive}) in terms of their ability to evaluate the lower bound and their computation efficiency. 
In Table 1, we present the computed lower bounds for selected product MDP states $p$ at a time step $k=17$ for a TWTL task $[H^1 P]^{[0,8]} \cdot [H^1 D_1]^{[0,8]}$ (``visiting $P$ within 8 time steps and holding 1 time step at P, after which visiting $D_1$ within 8 time steps and holding 1 time step at $D_1$"). The results indicate that the recursive solution consistently generates higher (less conservative) lower bounds than the closed-form solution.

 \begin{table}[h!]
 \centering
\begin{adjustbox}{width=0.6\columnwidth}
\begin{tabular}{||c c c c ||}
 \hline
 $k$ & $p$ & $lb^c[k][p]$ by (\ref{lower_bound}) & $lb^r[k][p]$ by Alg. \ref{alg:recursive} \\ [0.5ex] 
 \hline\hline
 \multirow{2}{*}{$k=17$} &
 \multicolumn{1}{l}{($P$, $q_0$)} &
 \cellcolor{gray!20}{0.814} & 
 \cellcolor{gray!45}{0.988} \\ &
 \multicolumn{1}{l}{($Base$, $q_0$)} & \cellcolor{gray!20}{0.359} & \cellcolor{gray!45}{0.798} 
 \\\hline
\end{tabular}
\end{adjustbox}
\caption{Lower bounds for the task 
$[H^1 P]^{[0,8]} \cdot [H^1 D_1]^{[0,8]}$}
\label{table:1}
\end{table}

In Table 2, we analyze the computation time for the closed-form and recursive solutions under various TWTL tasks. 
\textbf{Case 4a:} We consider a TWTL task of the form $[H^1P_1]^{[0,\,t_1]}\cdot[H^1P_2]^{[0,\,t_2]} \cdots$. By incrementally adding subtasks and adjusting their durations, while keeping the total task duration T constant, we increase the number of states in the product MDP and, consequently, the count of state-time pairs $(p,k)$. The results in Table 2 (Case 4a) reveal that the computation time of the recursive algorithm increases with the number of $(p,k)$ pairs while the closed-form solution is not affected. This result aligns with the expectation that the recursive algorithm's computational load increases due to the iterative solving of optimization problems for each $(p,k)$ pair. \textbf{Case 4b:} We consider a TWTL task of the form $[H^1P]^{[0,t_1]}\cdot([H^1D_1]^{[0,t_2]} |[H_1D_2]^{[0,t_2]})\cdot[H^1Base]^{[0,t_3]}$. While maintaining a fixed number of subtasks, we vary the duration of each subtask to alternate the total task duration $T$. As shown in Table 2, while the computation time for the closed-form solution increases significantly with the task duration $T$, it remains more computationally efficient than the recursive solution in either case, as expected. 

\begin{table*}
  \centering
    \resizebox{0.9\textwidth}{!}{%
    \begin{tabular}
            {|c|c|c|c||c|c|c|c|}
            \hline
            \multicolumn{4}{|c||}{\textbf{Case 4a: Fixed Horizon T and Varying \# of Subtasks} }  & \multicolumn{4}{c|}{\textbf{Case 4b: Varying Horizon T and Fixed \# of Subtasks}} \\ \hline
             \textbf{T} & \textbf{\# of (p,k) Pairs} & \textbf{Time for} $lb^c$ & \textbf{Time for} $lb^r$ & \textbf{T} & \textbf{\# of (p,k) Pairs} & \textbf{Time for} $lb^c$ & \textbf{Time for} $lb^r$ \\
            \hline
            62 & 7936 & 0.1029 & 7.68 & 32 & 8192 & 0.0159 & 8.711 \\
            \hline
             62 & 11904  & 0.1058 & 15.42  & 47 & 12032  & 0.0435 & 15.27 \\
            \hline
            62 & 15872  & 0.1019 & 20.78 & 62 & 15872  & 0.0891 & 21.14 \\
            \hline
            62 & 19840  & 0.1075 & 28.47 & 77 & 19712  & 0.1584 & 26.43 \\
            \hline
            62 & 23808  & 0.1029 & 35.35 & 92 & 23552  & 0.2531  & 32.57 \\
            \hline
    \end{tabular}
    }
    \captionof{table}{Computation Time of the Closed-form Solution and Recursive Algorithm}
    \label{tab:example}
\end{table*}

\section{Conclusion}
We proposed a switching-based algorithm for learning policies to optimize a reward function while ensuring the satisfaction of the BTL constraint with a probability greater than a desired threshold throughout the learning process. Our approach uniquely combines a stationary policy for ensuring constraint satisfaction and an RL policy for reward maximization. Utilizing the Wilson score method, we effectively estimate the satisfaction rate's confidence interval, thereby adaptively adjusting the switching probability between the two policies. This method achieves a desired trade-off between constraint satisfaction and reward collection. Simulation results have demonstrated the algorithm's efficacy, showing improved performance over existing methods. 

\bibliography{TRO}
\appendix
\section*{Appendix}
\noindent\textbf{Proof of Theorem 1.}
For any state $p$ whose distance-to-$F_{P}$ satisfies $0 <d< \infty$, there exists a shortest path on the product MDP from $p$ to $F_{P}$ consisting of only intended transitions, i.e., transitions that occur with probability at least $1-\epsilon$ under their respective actions. By design, the policy $\pi^{\epsilon}_{GO}$ selects actions that steer the system along such a shortest path. Under $\pi^{\epsilon}_{GO}$, \textit{intended transitions} reduce the distance to $F_{P}$ by one with a probability of at least $1-\epsilon$, while 
\textit{unintended transitions} can increase the distance to $F_{P}$ by at most $\delta_{max}$, as stated in Assumption \ref{assumption_1}. Accordingly, we derive a lower bound on the overall satisfaction probability by analyzing the probability of reaching $F_{P}$ in the remaining $k$ time steps under a worst-case scenario, where every transition under $\pi^{\epsilon}_{GO}$ is assumed to decrease the distance to $F_{P}$ by 1 with probability $1-\epsilon$ or increase it by $\delta_{max}$ with probability $\epsilon$. Accordingly, given $k$ remaining time steps, we can represent all possible changes in the distance to $F_{P}$ as a Bernoulli process with $k$ trials, denoted as $(X_1,\ldots, X_k)$. Here, the random variable $X_i$ takes the value 1 with a probability of $1-\epsilon$, and $-\delta_{max}$ with a probability of $\epsilon$. Since the distance from state $p$ to $F_{P}$ is $d$, all outcomes that reach $F_{P}$ within $k$ time steps can be expressed as the union of $k$ distinct sets  $T = T_1 \cup T_2 \cup \ldots \cup T_k$, where each $T_m$ consists of outcomes $(X_1,\ldots, X_k)$ that reach $F_{P}$ in $m$ steps, i.e., 
\begin{equation}\label{def_Tm}
    T_m=\{(X_1, \ldots, X_k)| \sum_{i=1}^{m} X_i = d, \hspace{0.2em} \text{and}\hspace{0.2em}  \sum_{i=1}^{m'} X_i < d, \forall m' <m\}.
\end{equation}
  Then, the lower bound $lb^c[k][p]$ can be derived by computing the probability of $T$, denoted by $P(T)$. Note that \eqref{def_Tm} implies
\begin{equation}\label{eq:k_step_property}
   T_f \cap T_g = \emptyset, \forall f \neq g \in \{1,2, \hdots, k\}.
\end{equation}
Hence,
\begin{equation}
\begin{split}
     P(T) & = P(T_1 \cup T_2 \cup \ldots \cup T_k) = P(T_1) + P(T_2) + \ldots + P(T_k).
\end{split}
\end{equation}

Furthermore, each $P(T_m)$ can be expressed as

\begin{equation}\label{eq:ptk_1}
\begin{split}
     P(T_m) 
     & \stackrel{\circled{1}}{=} P(\,\,\sum_{i=1}^{m} X_i = d \hspace{0.2em} \text{and}\hspace{0.2em}  \sum_{i=1}^{m'} X_i < d, \forall m'<m\,\,) \\
     & \stackrel{\circled{2}}{=} P(\,\sum_{i=1}^{m} X_i = d) - 
        P(\,\,\sum_{i=1}^{m} X_i = d \hspace{0.2em} \text{and}\hspace{0.2em} \exists m' <m \text{ s.t. } \sum_{i=1}^{m'} X_i \geq d \,)\\
     & \stackrel{\circled{3}}{=} P(\,\sum_{i=1}^{m} X_i = d) - 
        P(\,\,\sum_{i=1}^{m} X_i = d \hspace{0.2em} \text{and}\hspace{0.2em} (X_1,\ldots,X_k) \in \bigcup_{m'=1}^{m-1}T_{m'})\\
     & \stackrel{\circled{4}}{=} P(\,\sum_{i=1}^{m} X_i = d) - 
        \sum_{m'=1}^{m-1}P(\,\,\sum_{i=1}^{m} X_i = d \hspace{0.2em} \text{and}\hspace{0.2em} (X_1,\ldots,X_k) \in T_{m'})\\
     & \stackrel{\circled{5}}{=} P(\,\sum_{i=1}^{m} X_i = d) - 
       \sum_{m'=1}^{m-1}P(\sum_{i=m'+1}^{m} X_i = 0 \hspace{0.2em} \text{and}\hspace{0.2em} (X_1,\ldots,X_k) \in T_{m'})\\
     & \stackrel{\circled{6}}{=} P(\sum_{i=1}^{m} X_i = d) - 
        \sum_{m'=1}^{m-1}P(\sum_{i=m'+1}^{m} X_i = 0)P(T_{m'})\\
\end{split}
\end{equation} where each equality is obtained as follows:
\begin{itemize}
    \item[$\circled{1}$] directly follows from \eqref{def_Tm}.
    \item[$\circled{2}$] follows from the rule $P(A \cap B)=P(A)- P(A \cap \bar{B} )$.
    \item[$\circled{3}$] is due to the equality of the following sets $A$ and $B$: \newline 
    $A =\{ X \mid \sum_{i=1}^{m} X_i = d \hspace{0.2em} \text{and}\hspace{0.2em} \exists m'<m \text{ s.t. } \sum_{i=1}^{m'} X_i \geq d \}$, \newline  
    $B=\{X \mid \sum_{i=1}^{m} X_i = d\hspace{0.2em} \text{and}\hspace{0.2em}  \exists m'<m \text{ s.t. } X \in T_{m'}$\}$, \newline \text{which we show below by proving }$ $B\subseteq A$ and $A\subseteq B$. \newline $B \subseteq A$: Using \eqref{def_Tm},  $X \in B$ implies  $\exists m'<m \text{ s.t. } \sum_{i=1}^{m'} X_i = d $. Since such  $m'$ \text{ satisfy } $\sum_{i=1}^{m'} X_i \geq d $, $X \in A$. \newline
    $A \subseteq B$: For any $X=(X_1,\ldots,X_k) \in A$, let $ m'$ be the smallest integer $ \hspace{0.2em} s.t.\sum_{i=1}^{m'} X_i \geq d$. Since each $X_i \in \{1,-\delta_{max}\}$, the sum of $X_i$ can at most increase by 1 with each term, which implies $\sum_{i=1}^{m'-1} X_i = d-1$ and $\sum_{i=1}^{m'} X_i = d$. Accordingly, $X\in T_{m'}$, which implies $X\in B$.

\item[$\circled{4}$] is obtained by using \eqref{eq:k_step_property}. 
\item[$\circled{5}$] is obtained as follows: Using \eqref{def_Tm}, any $(X_1,\ldots,X_k) \in T_{m'}$ satisfies $\sum_{i=1}^{m'} X_i = d$. Hence the condition that $\sum_{i=1}^{m} X_i = d$ and $(X_1,\ldots,X_k) \in T_{m'}$ is equivalent to  $\sum_{i=m'+1}^{m} X_i = 0$ and $(X_1,\ldots,X_k) \in T_{m'}$.
\item[$\circled{6}$] follows from that the probability of $(X_1,\ldots,X_k) \in T_{m'}$ is independent of $X_{m'+1}, \hdots, X_k$, due to \eqref{def_Tm},  and $P(A \cap B) = P(A)P(B)$ when $A$ and $B$ are independent.
\end{itemize}
 
Next, we derive an expression for each $P(T_m)$ based on \eqref{eq:ptk_1}. We first compute $P(\sum_{i=1}^{m} X_i = d)$. Let  $p$ be the number of intended transitions and $q=m-p$ be the number of unintended transitions in $X_1,\ldots,X_m$.  Then, 
 \begin{equation}\label{eq:ptk_2}
   \sum_{i=1}^{m} X_i = d  
     \Leftrightarrow p-q\cdot \delta_{max}=d.
\end{equation}
Using \eqref{eq:ptk_2} and $q=m-p$, we obtain
 \begin{equation}\label{eq:ptk_3}
   \sum_{i=1}^{m} X_i = d  
     \Leftrightarrow p =  
            \frac{m\delta_{max}+d}{1+\delta_{max}}, q = \frac{m-d}{1+\delta_{max}}.
\end{equation}
Accordingly,
\begin{equation}
    \label{eq:ptk_4}  P(\sum_{i=1}^{m} X_i = d)  
       \begin{aligned}[t]
           & = C_{m}^{q}\epsilon^{q}(1-\epsilon)^{p}\\
           & = C_{m}^{\frac{m-d}{1+\delta_{max}}}\epsilon^{\frac{m-d}{1+\delta_{max}}}(1-\epsilon)^{\frac{m\delta_{max}+d}{1+\delta_{max}}},
        \end{aligned}
\end{equation}
 where $C_{m}^{n}$ is defined as 
 \begin{equation*}
      C_{m}^{n} =
  \begin{cases}
      0 & \text{if }n > m \hspace{0.4em} \text{or $n \notin \mathbb{Z^+}$}  \\
      \frac{m!}{n!(m-n)!} & \text{otherwise}
    \end{cases} 
 \end{equation*}
 
Similarly, to compute $P(\sum_{i=m'+1}^{m} X_i = 0)$, let there be $p$ intended actions and $q=m-m'-p$ unintended transitions in $(X_{m'+1},\ldots,X_m)$. Then, 
 \begin{equation}\label{eq:ptk_5}
 \sum_{i=m'+1}^{m} X_i = 0
     \Leftrightarrow p = \frac{(m-m')\delta_{max}}{1+\delta_{max}} , q = \frac{m-m'}{1+\delta_{max}}
     \end{equation}
Accordingly,
 \begin{equation}\label{eq:ptk_6}
 P(\sum_{i=m'+1}^{m} X_i = 0) 
       \begin{aligned}[t]
           & = C_{m-m'}^{q}\epsilon^{q}(1-\epsilon)^{p}\\
           & = C_{m-m'}^{\frac{m-m'}{1+\delta_{max}}}\epsilon^{\frac{m-m'}{1+\delta_{max}}}(1-\epsilon)^{\frac{(m-m')\delta_{max}}{1+\delta_{max}}}
        \end{aligned}
     \end{equation}

Plugging \eqref{eq:ptk_4} and \eqref{eq:ptk_6} into \eqref{eq:ptk_1}, we  to obtain an expression for each $P(T_m)$ as
\begin{equation}
    \label{eq:ptk_7}
    P(T_{m}) = \left[C_{m}^{\frac{m-d}{1+\delta_{max}}}\epsilon^{\frac{m-d}{1+\delta_{max}}}(1-\epsilon)^{\frac{m\delta{max}+d}{1+\delta_{max}}} - \sum_{m'=1}^{m-1}C_{m-m'}^{\frac{m-m'}{1+\delta_{max}}}\epsilon^{\frac{m-m'}{1+\delta_{max}}}(1-\epsilon)^{\frac{(m-m')\delta_{max}}{1+\delta_{max}}}P(T_{m'})\right],
\end{equation}
which then yields the expression for $lb^c[p][k] = \sum_{m=1}^{k} P(T_{m})$. 

\begin{algorithm2e}

    \DontPrintSemicolon
    \SetKwInOut{Input}{Input}
    \SetKwInOut{Output}{Output}
    \Input{
         Product MDP $\mathcal{P} = (S_{P},P_{init},A,\Delta_{P}, R_{P}, F_{P})$}
    \Output{Updated $\pi$ policy}
    \SetKwFunction{FMain}{update}
    \SetKwProg{Pn}{Function}{:}{\KwRet}
    Choose action $a$ from $\pi$ with $\epsilon$-greedy\\
    Take action $a$, observe the next state  $p'= (s',q')$ and reward $r$\\
    $Q(s,a) \leftarrow (1 - \alpha)Q(s,a)+\alpha(r+\gamma \max_{a'}Q(s', a'))$ \\
    $\pi(s) \leftarrow \argmax_a Q(s,a))$
       
    \caption{Tabular Q-learning}
     \label{alg:tabular-Q}
    \end{algorithm2e}
    
\vspace{-0.6cm}

\begin{algorithm2e}

    \DontPrintSemicolon
    \SetKwInOut{Input}{Input}
    \SetKwInOut{Output}{Output}
    \Input{
         product MDP $\mathcal{P} = (S_{P},P_{init},A,\Delta_{P}, R_{P}, F_{P})$}
    \Output{Updated $\pi$ policy}
    \SetKwFunction{FMain}{update}
    \SetKwProg{Pn}{Function}{:}{\KwRet}
    Choose action $a$ from $\pi$ with $\epsilon$-greedy\\
    Take action $a$, observe the next state  $p'= (s',q')$ and reward $r$\\
    Store transition ${(s,a,r,s')}$ in experience memory $D$\\
    Sample a random minibatch of transitions ${(s_j,a_j,r_j,s_{j+1})}$ from $D$ \\
    Set $y_j = r_j + \gamma \max\limits_a Q_{w-}(s_{j+1},a)$ \\
    Perform a gradient descent step on $(y_j - Q_w(s_j,a_j))^2$ \\
    Every C steps set $w_- = w$; \\
    $\pi(s) \leftarrow \argmax_a Q_w(s,a)$
       
    \caption{Deep Q-learning}
     \label{alg:DQN}
    \end{algorithm2e} 

\end{document}